\newtheorem{proposition}{Proposition}[section]
\newtheorem{corollary}{Corollary}[proposition]
\definecolor{bblue}{rgb}{0,0.15,0.25}
\definecolor{bmblue}{rgb}{0,0.51,0.73}
\definecolor{borange}{rgb}{0.97,0.3,0.09}
\definecolor{cred}{rgb}{0.6,0,0}
\definecolor{cgray}{rgb}{0.3,0.3,0.3}
\definecolor{cmgray}{rgb}{0.7,0.7,0.7}
\title{Recall Distortion in Neural Network Pruning \\ and the Undecayed Pruning Algorithm}
\author{%
  Aidan Good\thanks{Equal contribution} \\
  Bucknell University
  \And
  Jiaqi Lin\footnotemark[1] \\
  Bucknell University 
  \And
  Xin Yu\footnotemark[1] \\
  University of Utah 
  \And
  Hannah Sieg \\
  Bucknell University 
  \AND
  Mikey Ferguson \\
  Bucknell University
  \And 
  Shandian Zhe \\
  University of Utah 
  \And 
  Jerzy Wieczorek \\
  Colby College
  \And 
  Thiago Serra\thanks{Corresponding author: \texttt{thiago.serra@bucknell.edu}} \\
  Bucknell University 
}
\begin{document}

\maketitle

\begin{abstract}
Pruning techniques have been successfully used in neural networks to trade accuracy for sparsity. 
However, 
the impact of network pruning is not uniform:  
prior work has shown that the recall for underrepresented classes in a dataset may be more negatively affected. 
In this work, 
we study such relative distortions in recall by hypothesizing an intensification effect that is inherent to the model. 
Namely, 
that pruning makes recall relatively worse for a class with recall below accuracy 
and, conversely, that it makes recall relatively better for a class with recall above accuracy. 
In addition, 
we propose a new pruning algorithm aimed at attenuating such effect. 
Through statistical analysis,  
we have observed 
that intensification is less severe with our algorithm but nevertheless more pronounced with relatively more difficult tasks, less complex models, and higher pruning ratios. 
More surprisingly, we conversely observe a de-intensification effect with lower pruning ratios, 
which indicates that moderate pruning may have a corrective effect to such distortions.

\end{abstract}

\section{Introduction}

Back in the old days, 
network pruning was important to reduce the size of neural networks~\citep{hanson1988minimal,mozer1989relevance,janowsky1989prunning,lecun1989damage,hassibi1992surgeon,hassibi1993surgeon}. 
With new advances being increasingly reliant on overwhelmingly large and costly models~\citep{amodei2018aicompute,strubell2019policy,bommasani2021foundation}, 
network pruning is back to the game despite the gains in computing power. 

Network pruning reduces the complexity of large models by setting a substantial number of parameters to zero and then using gradient descent to fine-tune the sparser model. 
There is typically a trade-off between making the model sparser---i.e., having fewer parameters, since those set to zero can simply be discarded---and keeping the model as accurate as possible~\citep{blalock2020survey}, although moderate gains in sparsity may be obtained with little impact to accuracy~\citep{hoefler2021sparsity}. 
However, 
recent studies have found that the side effects of network pruning on model performance are not evenly distributed~\citep{hooker2019forget,paganini2020responsibly,hooker2020bias,ahia2021translation,stoychev2022effect,tran2022disparate}.

Those studies have considered the impact of network pruning on recall---i.e., the number of correct predictions per class---based on dataset representability. 
Namely, they have found that the degradation in performance is influenced by the uneven representation of classes~\citep{hooker2019forget,paganini2020responsibly,tran2022disparate} and features~\citep{hooker2020bias} as well as class complexity~\citep{paganini2020responsibly}. 
Hence, 
they corroborate long-standing concerns that unbalanced datasets yield models that are less accurate in and potentially harmful to minoritized groups~\citep{buolamwini2018gendershades,bender2021stochasticparrots}.

In this work, 
we complement those prior studies by posing recall distortion as inherent to network pruning even when the networks are trained and fine-tuned on datasets that are seemingly balanced; hence investigating when and how such distortions manifest, as well as how to reduce their effect.

More specifically, this paper presents the following contributions:
\begin{enumerate}[(i)]
\item We conduct a statistical study of model-level distortion to investigate how recall is affected by network pruning due to pruning ratios, dataset and model complexity, and pruning algorithms.
\item 
We observe an intensification effect, meaning that for classes with recall below accuracy we observe the relative difference between recall and accuracy negatively widening if the network is sufficiently pruned. For classes with recall above accuracy, we conversely observe that gap positively widening. The intensification correlates with excessive pruning ratios as well as more complex data and models, and is more pronounced with some pruning algorithms.
\item More surprisingly, we observe that otherwise network pruning has a corrective effect on recall differences, hence implying a de-intensification effect under moderate pruning.
\item We introduce a new gradient-based pruning method for networks trained with weight decay, Undecayed Pruning, which attenuates the intensification effect observed with other methods.  
\end{enumerate}


\section{Related work}\label{sec:related}

The use of large models has been justified and further encouraged by findings that the overparameterized regime may avoid the classic bias--variance trade-off~\cite{zhang2017rethinking,belkin2019nas} and lead to better convergence during training~\cite{li2018landscape,ruoyu2020landscape}.
However, 
these models come with steep environmental footprint and hardware needs~\cite{strubell2019policy}. 
Not surprisingly, 
they represent a relevant application of 
network pruning~\cite{gordon2020bert,xu2021beyond}. 

Network pruning has been motivated by parameter redundancy in models~\cite{denil2013parameters} and found to improve generalization~\citep{bartoldson2020generalization} and robustness against adversarial manipulation~\cite{wu2021adversarial,jordao2021effect}. 
The amount of pruning that is tolerable by an architecture may depend on the task in which it is trained~\cite{liebenwein2021lost}. 
As observed on a recent survey~\cite{blalock2020survey}, most approaches to network pruning are guided by either 
(i) removing the parameters with smallest absolute value ~\cite{hanson1988minimal,mozer1989relevance,janowsky1989prunning,han2015connections,han2016deepcompression,li2017cnn,frankle2019lottery,elesedy2020lth,gordon2020bert,tanaka2020synapticflow,liu2021sparse}; or (ii) removing the parameters with smallest impact on the model~\cite{lecun1989damage,hassibi1992surgeon,hassibi1993surgeon,lebedev2016braindamage,molchanov2017taylor,dong2017layerwise,yu2018importance,zeng2018mlprune,baykal2019coresets,lee2019pretraining,wang2019eigendamage,liebenwein2020provable,wang2020tickets,xing2020lossless,singh2020woodfisher,yu2022cbs}. 
We may regard exact pruning as a special case of (ii) in which the model is not affected~\cite{serra2020lossless,sourek2021lossless,serra2021compression,ganev2022compression}. 
Alternative approaches to network pruning include quantization~\cite{jacob2018quantization, micikevicius2017mixed, alvarez2016efficient}, knowledge distillation~\cite{kannan2018adversarial, li2020group, zhou2021meta, wang2020minilm, turc2019well, sun2019patient, jiao2019tinybert, xu2020bert}, and the use of regularization during training~\cite{Yu_2018_CVPR,fan2019reducing,riera2022jumpstart}.


In this work, 
we contribute to the study of recall distortion due to model compression~\citep{hooker2019forget,paganini2020responsibly,hooker2020bias,ahia2021translation,stoychev2022effect}, 
which a particular focus on network pruning. 
Concurrent to our work, Tran et al.~\cite{tran2022disparate} framed these recall distortions in unbalanced datasets as a Matthew effect 
that can be attributed to model (Hessian loss and gradient flows) as well as data characteristics (input norms and distance to decision boundary). 
Pruning distortion has also been studied under other metrics, 
such as by focusing on the samples for which the classification is affected~\cite{hooker2019forget,hooker2020bias,joseph2020going,xu2021beyond} and the relationship between false positive and negative rates~\citep{blakeney2021simon}. 
The issue is also studied in the context of other compression techniques such as knowledge distillation~\cite{hinton2015distilling}, 
in which some approaches to remedy this issue are focused on adjusting the loss function used in the compressed model~\citep{joseph2020going,xu2021beyond}.

Such disproportional impact on recall across classes relates to fairness in machine learning. 
The lack of representativeness and the biased context in which data is collected as well as the lack of transparency may lead to models making life-altering decisions that negatively affect minoritized groups, such as in criminal justice~\cite{oneil2016weapons,buolamwini2018gendershades,rudin2020age}. 
These concerns have motivated an extensive discussion on fairness in machine learning~\cite{cd2018measure} and on the proper assessment of datasets~\cite{gebru2021datasheets}, models~\cite{mitchell2019modelcards}, and the circumstances in which they are applied~\cite{selbst2019sociotechnical}. 
In this work, we focus on studying the algorithmic bias~\cite{mehrabi2021survey, hooker2021moving} of network pruning, that is, the bias that is either (i) not present in the input data nor the original model; or (ii) deteriorated or relieved by pruning.  
By subscribing to this line of study, our work aims at reducing the potential negative societal impacts associated with network pruning.

\section{Pruning algorithms}

We propose Undecayed Pruning (UP) by considering the interplay between the classic representatives of pruning methods described in Section~\ref{sec:related}: Magnitude Pruning (MP) and Gradient Pruning (GP).

\paragraph{Magnitude Pruning} MP is a simple but rather (mysteriously) effective technique of selecting among the parameters $\theta = \bar{\theta}$ the one with smallest absolute value to be pruned next from the network: 
\[
i = \arg \min_i \left\{ |\bar{\theta}_i| \right\}
\]
A parameter having a smaller value may not necessarily imply lesser importance to the model~\cite{hassibi1992surgeon} (hence the mystery part). 
However, it is worth noticing that this technique is usually applied to networks that were trained with regularization. 
Since we cannot add $L0$ regularization to the loss function for directly minimizing the number of nonzero parameters as this would make the loss function not differentiable for gradient descent, 
we can use $L1$ or $L2$ as a proxy, 
which induces the values of the parameters to be as small as possible. 
Although no parameter ends up being zero, some parameters become so small that making them equal to zero implies an almost negligible change.

\paragraph{Gradient Pruning}
GP approximates the impact of modifying the parameters of a neural network  
by estimating the first-order variation of the loss function $\mathcal{L}$ on the training set between the trained parameters $\theta = \bar{\theta}$ to a new set of parameters $\theta$ through the Taylor series around $\theta = \bar{\theta}$:
\[
\mathcal{L}(\theta) = \mathcal{L}(\bar{\theta}) + (\theta - \bar{\theta}) \nabla \mathcal{L} (\bar{\theta}) + O(\|\theta - \bar{\theta}\|^2)
\]
By assuming that $O(\|\theta - \bar{\theta}\|^2) \approx 0$ for small changes, 
we estimate the change to the loss function by pruning a single parameter $i$---i.e.,
with $\tilde{\theta}$ such that $\tilde{\theta}_i = 0$ and $\tilde{\theta}_j = \bar{\theta}_j$ for $j \neq i$---as follows:
\[
\mathcal{L}(\tilde{\theta}) - \mathcal{L}(\bar{\theta}) \approx (\tilde{\theta} - \bar{\theta}) \nabla \mathcal{L}(\bar{\theta}) = - \bar{\theta}_i \nabla_i \mathcal{L}(\bar{\theta})
\]
Given the approximate nature of estimating the impact on the training set, which may not necessarily reflect on the test set, small absolute perturbations are often preferred to negative but larger variations. 
Hence, the choice of the parameter $i$ to be pruned next is effectively framed as follows:
\[
i = \arg \min_i \left\{ |-\bar{\theta}_i \nabla_i \mathcal{L}(\bar{\theta})| \right\}
\]

\paragraph{Undecayed Pruning} 
We aim to address what we believe is a double-edged approach to reducing the number of parameters. Namely, that it is common to use a term in the loss function to make the weights as small as possible, such as weight decay, and then that we use the gradient of the loss function for pruning weights while ignoring that the gradient is also affected by the weight decay term.
We understand that simultaneously using regularization and GP is conflicting because sparsity is induced in two different ways. However, whereas GP has a more direct proxy to model impact, using some amount of regularization tends to be beneficial during training. Hence, we approach this by isolating the effect of regularization from the variation of the loss function. Due to its greater popularity, we consider $L2$ regularization--i.e,, weight decay--in what follows. Let us denote by $\mathcal{T}$ the loss function of a model after deducting weight decay with hyperparameter $\varepsilon$:
\[
\mathcal{T}(\theta) = \mathcal{L}(\theta) - \frac{\varepsilon}{2} \| \theta \|^2
\]
With $\nabla_i \mathcal{T}(\theta) = \nabla_i \mathcal{L}(\theta) - \varepsilon \theta_i$, 
we estimate the change to the alternative loss function $\mathcal{T}$ by pruning a single parameter $i$---i.e.,
with $\tilde{\theta}$ such that $\tilde{\theta}_i = 0$ and $\tilde{\theta}_j = \bar{\theta}_j$ for $j \neq i$---as follows:
\[
\mathcal{T}(\tilde{\theta}) - \mathcal{T}(\bar{\theta}) \approx (\tilde{\theta} - \bar{\theta}) \nabla \mathcal{T}(\bar{\theta}) = - \bar{\theta}_i \nabla_i \mathcal{T}(\bar{\theta}) = - \bar{\theta}_i \nabla_i \mathcal{T}(\bar{\theta}) = 
- \bar{\theta}_i \nabla_i \mathcal{L}(\bar{\theta}) + \varepsilon \bar{\theta}^2_i
\]
In other words, discounting weight decay is equivalent to a specific balance of the criteria for MP and GP: the first term is equivalent to GP; the second term is curiously equivalent to MP; and the latter is prioritized in proportion to the weight decay hyperparameter $\varepsilon$. 
Moreover, if the neural network training converges to a local optimum, in which case $\nabla \mathcal{L}(\bar{\theta}) = 0$, 
then UP is equivalent to MP. 
Similar to the case of GP, 
we choose the parameter $i$ to prune based on the absolute impact on $\mathcal{T}$:
\[
i = \arg \min_i \left\{ |-\bar{\theta}_i \nabla \mathcal{L}(\bar{\theta}) + \varepsilon \bar{\theta}_i^2| \right\}
\]
The way that weights are now ranked reflects a weighted combination of the criteria used for MP and GP. This is particularly interesting because UP coincides with MP when the gradient is sufficiently close to zero, hence providing a principled argument for the effectiveness of MP under weight decay.

\section{Model properties}

Let $A(m)$ be the accuracy of an unpruned model $m$ as measured on the test data. 
In other words, $A(m)$ is the number of correct predictions divided by the number of samples. 
For a \emph{pruning ratio} $t$, meaning that $t$ is the number of parameters before pruning divided by the number of parameters after pruning, 
let $A_t(m)$ denote the accuracy of model $m$ on the test data after pruning. 
For simplicity, 
we may assume that $t=1$ if $t$ is absent from the notation and omit $m$ if always referring to the same model, and thus we may assume that $A = A(m) = A_1(m)$. The same applies to other metrics. 

Similarly, let $R^c(m)$ be the \emph{recall} for class $c$ of an unpruned model $m$ on the test data. 
In other words, $R^c$ is the number of correct predictions for class $c$ divided by the number of samples for class $c$. 
For a pruning ratio $t$, let $R^c_t(m)$ denote the recall for class $c$ of model $m$ on the test data after pruning.

We are particularly interested in the how these metrics differ for each class. 
Namely, let 
\[
B_t^c(m) = R^c_t(m) - A_t(m)
\]
denote the \emph{recall balance}\footnote{For a web search on April 22, 2022, there were no hits for either ``recall minus accuracy'' or ``subtract accuracy from recall'' and very few references for ``difference between recall and accuracy''.}. When $B_t^c(m) > 0$, we say that model $m$ at pruning ratio $t$ \emph{overperforms} for class $c$; and when $B_t^c(m) < 0$ we may say that $m$ \emph{underperforms} for class $c$. 
Finally, let 
\[
\bar{B}_t^c(m) = \frac{B_t^c(m)}{A_t(m)} = \frac{R^c_t(m) - A_t(m)}{A_t(m)}
\]
denote the \emph{normalized recall balance}. The further away this value is from 0, the more pronounced is the difference in performance between class $c$ and the other classes in model $m$ at the pruning ratio $t$.

\begin{proposition}\label{prop:sum_zero}
If a dataset with set of classes $\mathbb{C}$ is \emph{balanced}, meaning that each class $c \in \mathbb{C}$ has the same number of samples, then for any model $m$ and pruning ratio $t$ it holds that
\[
\sum_{c \in \mathbb{C}} B_t^c(m) = 0 .
\]
\end{proposition}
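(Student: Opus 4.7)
The plan is to reduce the claim to the observation that, on a balanced dataset, overall accuracy is exactly the arithmetic mean of the per-class recalls. Once that identity is in hand, the sum over $c$ telescopes immediately.

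Concretely, I would first introduce notation for the class sizes. Let $n_c$ denote the number of test samples in class $c$, and let $N = \sum_{c \in \mathbb{C}} n_c$ be the total number of samples. By the definition of recall, the number of correct predictions on class $c$ made by the pruned model is $n_c \cdot R_t^c(m)$. Summing over classes and dividing by $N$ gives
\[
A_t(m) \;=\; \frac{1}{N} \sum_{c \in \mathbb{C}} n_c \, R_t^c(m).
\]
Now I would invoke the balancedness hypothesis: $n_c = N/|\mathbb{C}|$ for every $c$. Substituting yields
\[
A_t(m) \;=\; \frac{1}{|\mathbb{C}|} \sum_{c \in \mathbb{C}} R_t^c(m).
\]

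The final step is a direct computation: using the definition $B_t^c(m) = R_t^c(m) - A_t(m)$,
\[
\sum_{c \in \mathbb{C}} B_t^c(m) \;=\; \sum_{c \in \mathbb{C}} R_t^c(m) \;-\; |\mathbb{C}| \cdot A_t(m) \;=\; 0,
\]
by the identity above. There is really no obstacle here; the only thing to be careful about is making explicit the weighted-average interpretation of accuracy so that the role of the balanced-dataset assumption is transparent (on an unbalanced dataset, accuracy is still a weighted average of recalls, but with non-uniform weights, so the sum of balances is generally nonzero). I would also briefly note that the argument is independent of $m$ and $t$, so it applies equally to the unpruned model and to any pruning ratio, which is the level of generality the proposition asserts.
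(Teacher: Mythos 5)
Your proof is correct and follows essentially the same route as the paper's: both establish that under balancedness the accuracy $A_t$ equals the unweighted mean of the per-class recalls, and then sum the balances to get zero. Your version merely makes the weighted-average derivation of that identity more explicit, which the paper takes as immediate.
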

\begin{proof}
If the dataset is balanced, then it follows that $A_t = \dfrac{\sum_{c \in \mathbb{C}} R_t^c}{|\mathbb{C}|}$ and hence $\sum_{c \in \mathbb{C}} R_t^c = |\mathbb{C}| A_t$. Therefore, $0 = \sum_{c \in \mathbb{C}} (R_t^c) - |\mathbb{C}| A_t = \sum_{c \in \mathbb{C}} (R_t^c - A_t) = \sum_{c \in \mathbb{C}} B_t^c$.
\end{proof}

\begin{corollary}\label{cor:sum_zero}
If a dataset is balanced, then for any model $m$ and pruning ratio $t$ it holds that 
\[
\sum_{c \in \mathbb{C}} \bar{B}_t^c(m) = 0 .
\]
\end{corollary}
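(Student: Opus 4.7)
The plan is to reduce Corollary~\ref{cor:sum_zero} directly to Proposition~\ref{prop:sum_zero} by exploiting the fact that the denominator in the definition of $\bar{B}_t^c(m)$ does not depend on the class index $c$. Concretely, I would start from the definition $\bar{B}_t^c(m) = B_t^c(m)/A_t(m)$, substitute it into the sum $\sum_{c \in \mathbb{C}} \bar{B}_t^c(m)$, and observe that $A_t(m)$ is a class-independent constant that factors out of the summation.

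After factoring, the expression becomes $\frac{1}{A_t(m)} \sum_{c \in \mathbb{C}} B_t^c(m)$. At that point, Proposition~\ref{prop:sum_zero} applies directly under the balanced-dataset hypothesis, yielding $\sum_{c \in \mathbb{C}} B_t^c(m) = 0$, and therefore the entire expression equals zero. The whole argument is essentially a one-line consequence of the preceding proposition together with the linearity of summation under a scalar divisor.

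The only subtlety worth addressing is that the division by $A_t(m)$ must be well-defined, which requires $A_t(m) \neq 0$. This is a mild assumption implicit in the original definition of the normalized recall balance: it amounts to the pruned model making at least one correct prediction on the test data, which will hold in any regime of interest (and is trivially satisfied whenever the unnormalized statement of Proposition~\ref{prop:sum_zero} is meaningful to normalize). I would include a brief parenthetical remark to flag this rather than treat it as a substantive obstacle.
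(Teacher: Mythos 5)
Your proposal is correct and matches the paper's own proof, which likewise just factors the class-independent $A_t(m)$ out of the sum and invokes Proposition~\ref{prop:sum_zero}. The extra remark about requiring $A_t(m) \neq 0$ is a reasonable (if minor) addition the paper leaves implicit.
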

\begin{proof}
Immediate from dividing $\sum_{c \in \mathbb{C}} B_t^c(m)$ by $A_t(m)$ due to Proposition~\ref{prop:sum_zero}.
\end{proof}

\paragraph{Example} 
Consider a model in which $A = 80\%$ with two classes $X$ and $Y$ such that $R^X = 90\%$ and $R^Y = 70\%$, and after pruning we have $A_t = 60\%$, $R^X_t = 70\%$, and $R^Y_t = 50\%$. Note that the recall balance remains the same for each class before and after pruning: $B^X = B^X_t = 10\%$ and $B^Y = B^Y_c = -10\%$. However, the predictions of the pruned network for class $Y$ are as good as a guess. 
In turn, the normalized recall balance is the same in absolute value when the two classes are compared, but that value increases due to pruning: 
$\bar{B}^X = - \bar{B}^Y = \frac{1}{8}$ and $\bar{B}^X_t = - \bar{B}^Y_t = \frac{1}{6}$.

Our choice of normalizing through dividing by $A_t(m)$ rather than by $R^c_t(m)$ aims at avoiding outliers from small recall values as well as undefined results if a high pruning ratio leads to zero recall. Moreover, an extension of Proposition~\ref{prop:sum_zero} such as Corollary~\ref{cor:sum_zero} would not be possible otherwise.

We may obtain similar properties by replacing recall $R^c(m)$ with \emph{precision} $P^c(m)$. In other words, $P^c$ is the number of correct predictions for class $c$ divided by the total number of predictions for class $c$. By extension we may also consider the effect of pruning on the \emph{F-score}, in which case we would consider replacing $R^c$ with the harmonic mean of recall and precision, i.e., $2 \dfrac{R^c P^c}{R^c + P^c}$.
However, we have preliminarily found the effect of pruning more expressive on recall than on precision or F-score.

\section{Measuring intensification}\label{sec:intensification}

For a given pruning ratio $t$ and class $c$ of a model $m$, 
we consider the following \emph{intensification ratio}:
\[
I_t^c(m) := \frac{\bar{B}_t^c(m)}{\bar{B}^c(m)} \equiv
\frac{\text{Normalized recall balance \textbf{after} pruning}}{\text{Normalized recall balance \textbf{before} pruning}}
\]

This metric can be used to evaluate if pruning widens the performance gap between classes, which happens when $I_t^c > 1$. 
For a class $c$ in which the original model overperforms ($\bar{B}^c > 0$), 
a ratio greater than 1 after pruning implies that $|\bar{B}_t^c| > |\bar{B}^c|$ since $\bar{B}^c > 0 \wedge I_t^c> 1 \rightarrow \bar{B}_t^c > \bar{B}^c > 0$. 
Similarly, for a class $c$ in which the original model underperforms ($\bar{B}^c < 0$), 
a ratio greater than 1 after pruning \emph{also} implies that $|\bar{B}_t^c| > |\bar{B}^c|$ since $\bar{B}^c < 0 \wedge I_t^c> 1 \rightarrow \bar{B}_t^c < \bar{B}^c < 0$. 
In other words, 
we can use the same metric to determine if pruning comparatively improves and worsens normalized recall balance for classes in which the model respectively  overperforms and underperforms.

\paragraph{Example (cont.)} For both classes $X$ and $Y$, we obtain the same intensification ratio $I^X_t = I^Y_t = \frac{4}{3}$. Hence, pruning the model leads to a greater disparity in relative performance for those classes.

Since $I_t^c = \dfrac{B_t^c}{B^c} \dfrac{A}{A_t}$ 
and we may assume $A_t < A$ for sufficiently large $t$, 
one could argue that $\dfrac{A}{A_t} > 1$ alone leads to $I_t^c > 1$. 
However, our experiments show quite the opposite in the case of random pruning, which can be regarded as a less careful selection of weights to prune. 
Moreover, we consider a ratio above 1 from another perspective. 
Namely, $I_t^c > 1$ implies $\dfrac{B_t^c}{B^c} > \dfrac{A_t}{A}$ indicating the relative drop in recall balance (if indeed it drops at all) is less severe than the relative drop in accuracy. 

We summarize intensification across classes for a model $m$ at pruning ratio $t$ by estimating the slope $\alpha_t(m)$ of 
a simple linear regression between normalized recall balance before and after pruning:
\[
\bar{B}^c_t(m) = \alpha_t(m) \bar{B}^c(m)
\]
The slope $\alpha_t(m)$ is fit using all classes in $\mathcal{C}$. 
We omit the intercept term from this regression as it would always be zero in the balanced datasets that we study due to Corollary~\ref{cor:sum_zero}. Consequently, the ordinary least squares estimate of the slope becomes a weighted mean of the intensification ratios $I_t^c$: 
\[\alpha_t = \dfrac{\sum\limits_{c\in \mathbb{C}} \bar{B}^c\bar{B}^c_t}{\sum\limits_{c\in \mathbb{C}} (\bar{B}^c)^2}
= \dfrac{\sum\limits_{c\in \mathbb{C}} (\bar{B}^c)^2 I^c_t}{\sum\limits_{c\in \mathbb{C}} (\bar{B}^c)^2}\]
In this weighted mean, less weight is given to classes in which normalized recall balance before pruning is closer to 0. We believe this is an appropriate model-level summary, since it de-emphasizes classes for which recall and accuracy are nearly the same before pruning. 
Hence, we conclude that pruning induces an overall intensification on $m$ if $\alpha_t(m)>1$ and de-intensification if $\alpha_t(m)<1$.

In addition, within each model there is a statistical dependency between the set of per-class ratios $\{ I^c_t ~|~ c \in \mathbb{C} \}$ obtained after pruning. However, the model-level summaries $\alpha_t(m)$ are independent across random initializations associated with training each model $m$ (conditional on the dataset), which allows us to use simpler statistical inference methods as described in Section~\ref{sec:stat_analysis}. 
In our statistical analysis, we evaluate 
$\mathbb{E}[\alpha_t]$ as the hypothetical expectation across infinitely many $m$. 

Comparatively, the statistical analysis in Hooker et al.~\cite{hooker2019forget} would be equivalent to evaluating $\mathbb{E}[I_t^c]$, as it is based on $\dfrac{R_t^c}{A_t} = I_t^c - 1$.
A multiple linear regression with $\dfrac{R_t^c}{A_t}$ as  dependent variable and $\dfrac{R^c}{A}$ as one of the independent variables is fit by 
Paganini~\cite{paganini2020responsibly} over a large number of models and classes.

\section{Experimental setting}

We seek to understand when $\mathbb{E}[\alpha_{t}] > 1$, or alternatively $ \mathbb{E}[\alpha_{t}] < 1$, by evaluating the impact of network pruning algorithms on models trained on seemingly balanced datasets and through varying but well-known architectures. 
Hence, we consider the dependence of the intensification ratio at the model level by denoting it as $\alpha^{D,M}_{t,P}$ for a dataset $D \in \mathbb{D}$, an architecture $M \in \mathbb{M}$, a pruning ratio $t \in \mathbb{T}$, and a pruning algorithm $P \in \mathbb{P}$. In what follows, we may omit the indices if they are constant.

\subsection{Computational details}

We use some combinations of (i) models based on ResNet-$\{20,32,44,56,110\}$~\cite{he2016resnet}  and an adaptation of LeNet5~\cite{lecun1998mnist} (ii) trained in MNIST~\cite{lecun1998mnist}, Fashion-MNIST~\cite{xiao2017fashion}, CIFAR-10, and CIFAR-100~\cite{krizhevsky2009cifar}, and then (iii) pruned using MP, GP, and Random Pruning (RP) from ShrinkBench~\cite{blalock2020survey} and our implementation of UP over ShrinkBench, all of which tested with (iv) pruning ratios 2, 4, 10, 20, and 50. 
For each combination of model and dataset evaluated, we have trained 30 models, 
which are the same used for pruning at each pruning ratio. 
The datasets were chosen due to their popularity and equal representation across classes. 
The architectures and pruning ratios were chosen based on preliminary experiments aiming for good accuracy and also to prune to up to a ratio with evidence of distortion. 
MNIST and Fashion-MNIST are only trained in LeNet5.
When the dataset or model does not vary, we use CIFAR-10 and ResNet-56 due to their intermediary complexity. 
When the pruning algorithm does not vary, we use MP due to its popularity and seemingly better performance than  GP~\cite{blalock2020survey}. 

The LeNet5 models are trained with SGD optimizer for 30 epochs, with batch size of 128 and learning rate of $0.01$, and then fine-tuned for another 15 epochs after pruning. 
The ResNet models are trained with SGD optimizer for 60 epochs, with batch size of 128, a decreasing learning rate schedule, and weight decay of $0.0005$. These values were selected based on preliminary testing.
For all models, the weights for the epoch with the lowest validation loss are saved during training and fine-tuning for testing. All experiments are run on GPUs of Nividia GeForce 3060ti and 3090.

\subsection{Statistical methods}\label{sec:stat_analysis}

We are interested in comparing the distributions of $\alpha^{D,M}_{t,P}$ across different scenarios. Each $\alpha^{D,M}_{t,P}$ is calculated using the test set recall metrics from just one run of training and a pair of models, corresponding to before and after pruning at ratio $t$. Thus we frame our statistical analyses as inferences about $\mathbb{E}[\alpha^{D,M}_{t,P}]$, where the expectation is taken over neural network models trained with different random seeds. 
We have tested varying $M$, $D$, or $P$ along with $t$. 
Hence, we evaluate the impact of the pruning ratio along every other dimension. 
For model complexity through $\alpha^M_t$, we use CIFAR-10 trained on ResNet-$\{20,32,44,56,110\}$ and pruned with MP. 
For dataset complexity through $\alpha^D_t$, we use all the datasets with their default model and pruned with MP.
For pruning algorithm through $\alpha_{t,P}$, we use CIFAR-10 trained on ResNet-56 and pruned with all pruning methods. 
Due to the encouraging results with UP, we have repeated the model complexity with UP for comparison. 

\paragraph{Figures} Each scatterplot matrix shows the normalized recall balances before vs after pruning, at several $t$ (rows) and several of either $M$, $D$, or $P$ (columns). The scatterplots for varying $P$ are in Figure~\ref{fig:algo_scatterplots}; the remaining scatterplots can be found in Appendix~\ref{ap:scatter}, including Figure~\ref{fig:model_scatterplots_UP} for varying $M$ with UP instead of MP as the pruning algorithm. Each point corresponds to one class $c$ for one model $m$. One regression line and numeric summaries are overlaid in each subplot: $\hat\alpha$ is an average slope using all $m$ together, $r^2$ is the corresponding coefficient of determination, and $\bar A$ is the average accuracy across all $m$. Boxplots summarize how the corresponding model-level slopes $\alpha^{D,M}_{t,P}(m)$ vary across $m$, and how their distribution changes with $t$ and either $M$, $D$, or $P$. 
These boxplots are in Figures~\ref{fig:algo_boxplots} to \ref{fig:dataset_boxplots}, in addition to Figure~\ref{fig:up_boxplots} for varying $M$ with UP instead of MP.

\paragraph{Confidence intervals} At each boxplot, we calculate a t-based 99\% confidence interval (CI) for its $\mathbb{E}[\alpha^{D,M}_{t,P}]$. We chose 99\% confidence to achieve 95\% family-wide confidence (within each family of 5 pruning ratios at a given $M$, $D$, or $P$) after a Bonferroni correction for simultaneously reporting 5 dependent CIs. The labels for pruning ratios in the boxplots include $<$, $>$, or $?$ to denote whether the corresponding CI is below 1 (de-intensification), above 1 (intensification), or overlaps 1. 
Appendix~\ref{ap:ci} has plots for each CI.
Figures~\ref{fig:algo_boxplots}, \ref{fig:model_boxplots}, and \ref{fig:dataset_boxplots} indicate that within each $M$, $D$, or $P$, low ratios tend to have CIs entirely below 1, some moderate ratios have CIs that overlap with 1, and high ratios tend to have CIs entirely above 1.
The exception is $P$=RP, where the CIs are below 1 at all ratios.

\begin{figure}[!ht]
    \centering
    \includegraphics{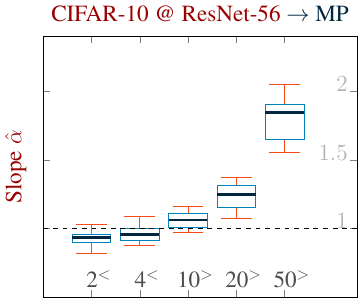}
    \includegraphics{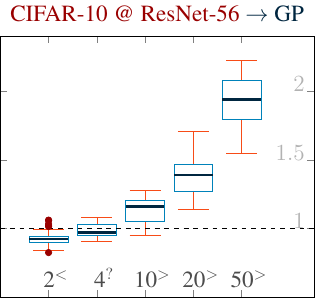}
    \includegraphics{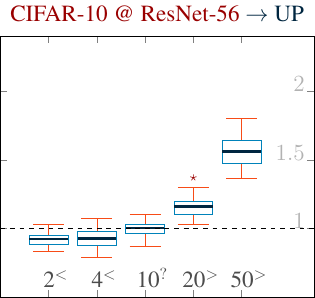}
    \includegraphics{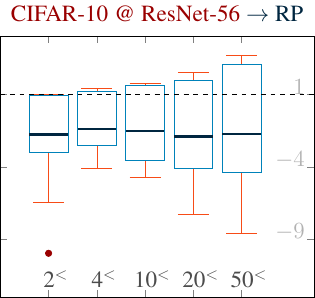}
     \caption{Boxplots of $\alpha^{D,M}_{t,P}(m)$ across $m$, at each $t$ within each $P$. Superscripts $<$, $>$, or $?$ denote where 99\% CIs were below 1, above 1, or overlapped 1.}
    \label{fig:algo_boxplots}
\end{figure}

\begin{figure}[!ht]
    \centering
    \includegraphics{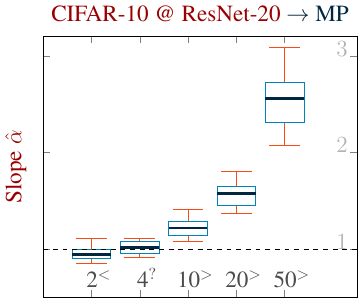}
    \includegraphics{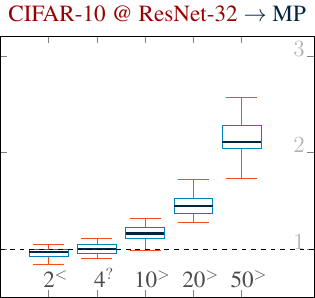}
    \includegraphics{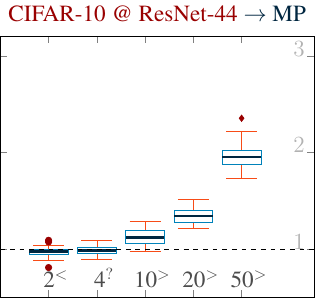}
    \includegraphics{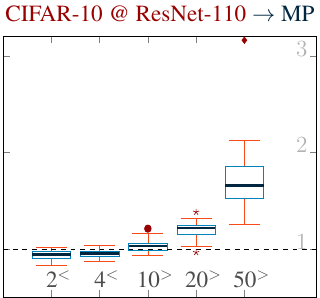}
    \caption{Boxplots of $\alpha^{D,M}_{t,P}(m)$ across $m$, at each $t$ within each $M$ for $P$ = MP. Superscripts $<$, $>$, or $?$ denote where 99\% CIs were below 1, above 1, or overlapped 1.}
    \label{fig:model_boxplots}
\end{figure}

\begin{figure}[!ht]
    \centering
    \includegraphics{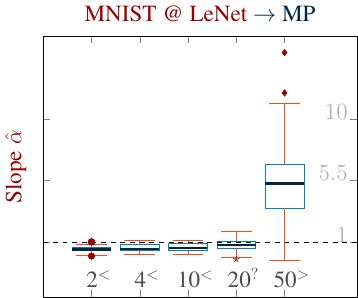}
    \includegraphics{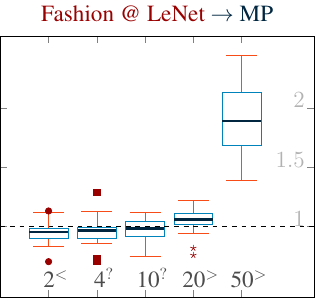}
    \includegraphics{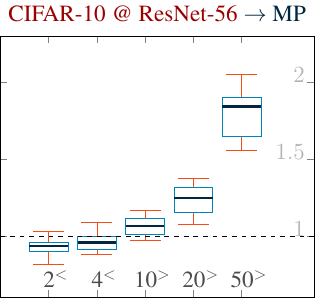}
    \includegraphics{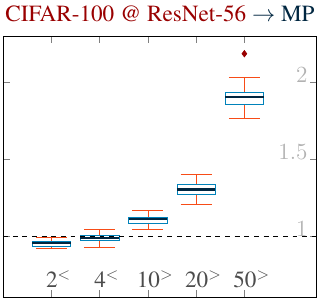}
    \caption{Boxplots of $\alpha^{D,M}_{t,P}(m)$ across $m$, at each $t$ within each $D$. Superscripts $<$, $>$, or $?$ denote where 99\% CIs were below 1, above 1, or overlapped 1.}
    \label{fig:dataset_boxplots}
\end{figure}

\begin{figure}[h]
    \centering
    \includegraphics{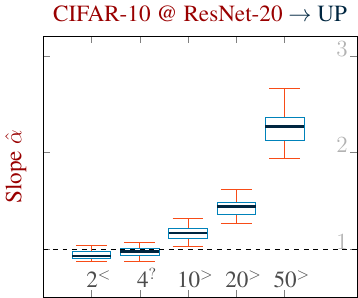}
    \includegraphics{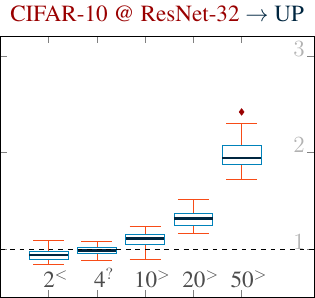}
    \includegraphics{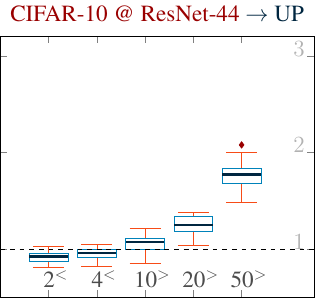}
    \includegraphics{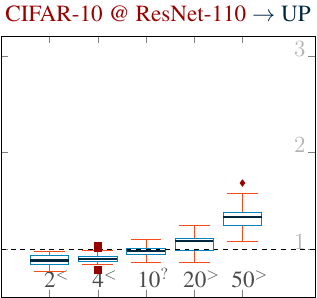}

    \caption{Boxplots of $\alpha^{D,M}_{t,P}(m)$ across $m$, at each $t$ within each $M$ for $P$=UP. Superscripts $<$, $>$, or $?$ denote where 99\% CIs were below 1, above 1, or overlapped 1.}
    \label{fig:up_boxplots}
\end{figure}

\paragraph{Hypothesis tests by $t$, $M$, $D$, or $P$}  We carry out t-tests for each set of hypotheses in Section~\ref{sec:analysis}. Tests comparing pairs of $M$ or of $D$ are independent-samples tests, but tests comparing pairs of $t$ or of $P$ are paired-samples tests: for each $m$ there is a natural pairing between two $\alpha$s using the same uncompressed $m$.
Each p-value reported in Tables~\ref{table:ratio_p_val_by_model_MP} through \ref{table:algo_p_val_MPvsUP} (Appendix~\ref{ap:tables}) has been multiplied by the number of rows in its column of the table, as a Bonferroni multiple-testing correction for simultaneously evaluating all the rows in that column.
When a reported p-value is below the usual significance level 0.05, we have evidence that intensification differs between the pairs in that setting.

\paragraph{Accuracy before pruning} In order to assess the impact of pruning on accuracy, the mean accuracy before pruning for each dataset and model is reported in Table~\ref{tab:acurracy} (Appendix~\ref{ap:accuracy}).


\section{Analysis}\label{sec:analysis}

\subsection{The influence of the pruning ratio}

We carry out one-sided paired-samples t-tests between the following pairs of hypotheses:
\begin{equation*}
    \begin{array}{ccccc}
        H^{M,t,i}_0: \mathbb{E}[\alpha^{M}_{t_i}] \geq \mathbb{E}[\alpha^{M}_{t_{i+1}}] & ~ &        H^{D,t,i}_0: \mathbb{E}[\alpha^{D}_{t_i}] \geq \mathbb{E}[\alpha^{D}_{t_{i+1}}] & ~ &        H^{P,t,i}_0: \mathbb{E}[\alpha^{P}_{t_i}] \geq \mathbb{E}[\alpha^{P}_{t_{i+1}}]
        \\[2ex]
        H^{M,t,i}_a: \mathbb{E}[\alpha^{M}_{t_i}] < \mathbb{E}[\alpha^{M}_{t_{i+1}}] &  &      H^{D,t,i}_a: \mathbb{E}[\alpha^{D}_{t_i}] < \mathbb{E}[\alpha^{D}_{t_{i+1}}] &  &      H^{P,t,i}_a: \mathbb{E}[\alpha^{P}_{t_i}] < \mathbb{E}[\alpha^{P}_{t_{i+1}}]
    \end{array}
\end{equation*}
In all of those, $t_i < t_{i+1}$ are consecutive pruning ratios in $\mathcal{T}$. 
We based our a priori on the observation that a higher pruning ratio typically leads to models that have a lower performance. As such we believe more pruned models would have less capacity to identify as many classes with similar recall when compared with less pruned models, and thus be more susceptible to intensification after pruning.

Figures~\ref{fig:model_boxplots} and \ref{fig:model_scatterplots} suggest that at each $M$, intensification tends to remain similar or become stronger at higher pruning ratios. Tables~\ref{table:ratio_p_val_by_model_MP} and \ref{table:ratio_p_val_by_model_UP} confirm strong evidence of this trend between most consecutive pairs of ratios, for both MP and UP, at most model sizes, except for ratios 2 vs 4 on some larger architectures.

Figure~\ref{fig:dataset_boxplots} and Figure~\ref{fig:dataset_scatterplots} suggest that at each $D$, intensification tends to remain similar or become stronger at higher pruning ratios.
Table~\ref{table:ratio_p_val_by_dataset} confirms that we have strong evidence of increasing intensification for MNIST when going from pruning ratio 20 to 50; for Fashion, when going from 10 to 20 and from 20 to 50; and for both CIFAR-10 and CIFAR-100, for every consecutive pair of ratios.

Figure~\ref{fig:algo_boxplots} and Figure~\ref{fig:algo_scatterplots} suggest that at each $P$ except RP, intensification tends to remain similar or become stronger at higher pruning ratios.  Table~\ref{table:ratio_p_val_by_algo} confirms strong evidence of this trend between most consecutive pairs of ratios at most $P$. However, for RP, the average slope actually appears to decrease instead during several ratio increases.

\subsection{The influence of model complexity}

For each ratio $t \in \mathcal{T}$, we carry out a one-sided independent-samples t-test of 
\begin{equation*}
    \begin{array}{c}
        H^{M,i,t}_0: \mathbb{E}[\alpha^{M_i}_{t}] \leq \mathbb{E}[\alpha^{M_{i+1}}_{t}]
        \\[2ex]
        H^{M,i,t}_a: \mathbb{E}[\alpha^{M_i}_{t}] > \mathbb{E}[\alpha^{M_{i+1}}_{t}]
    \end{array}
\end{equation*}
for $M_i$ and $M_{i+1}$ as consecutive models in ResNet-$\{20,32,44,56,110\}$ on CIFAR-10, using MP and UP. 
%
 We based our a priori on the observation that larger (deeper) networks are able to learn more complex nonlinear functions than smaller (shallower) networks. As such we believe smaller pruned models would have less capacity to identify as many classes with similar recall when compared to larger pruned models, and thus be more susceptible to intensification after pruning.

Figures~\ref{fig:model_boxplots}, \ref{fig:up_boxplots}, \ref{fig:model_scatterplots}, and \ref{fig:model_scatterplots_UP} suggest that at most ratios, smaller model sizes tend to have more intensification than larger model sizes.  Tables~\ref{table:model_p_val_MP} and \ref{table:model_p_val_UP} confirm strong evidence of this trend between most consecutive pairs of model sizes (except 56 vs 110 with MP) at high $t$, but rarely at low $t$.

\subsection{The influence of dataset complexity}

For each ratio $t \in \mathcal{T}$, we carry out a one-sided independent-samples t-test of
\begin{equation*}
    \begin{array}{c}
        H^{D,i,j,t}_0: \mathbb{E}[\alpha^{D_i}_{t}] \geq \mathbb{E}[\alpha^{D_j}_{t}]
        \\[2ex]
        H^{D,i,j,t}_a: \mathbb{E}[\alpha^{D_i}_{t}] < \mathbb{E}[\alpha^{D_j}_{t}]
    \end{array}
\end{equation*}
for the following $(D_i,D_j)$ pairs: (MNIST, CIFAR-10); (Fashion, CIFAR-10); and (CIFAR-10, CIFAR-100), using MP. 
We based our a priori on how “more complex” datasets often require larger (deeper) models to achieve acceptable accuracy. As such we believe by increasing dataset complexity the pruned model would not be able to identify as many classes with similar recall when compared to less complex datasets, 
and thus be more susceptible to intensification. 
We judged that MNIST and Fashion have comparable complexity (10 black-and-white classes), but both are less complex than CIFAR-10 (10 RGB classes), which is less complex than CIFAR-100 (100 RGB classes).

Figure~\ref{fig:dataset_boxplots} and  Figure~\ref{fig:dataset_scatterplots} suggest that at most ratios, MNIST and Fashion tend to have shallower slopes (less intensification) than CIFAR-10, which has shallower slopes than CIFAR-100. Table~\ref{table:dataset_p_val} confirms strong evidence of this trend between MNIST and CIFAR-10 at the smaller ratios; between Fashion and CIFAR-10 only at moderate ratios; and between CIFAR-10 and CIFAR-100 only at the higher ratios.

\subsection{The influence of the pruning algorithm}

For each ratio $t \in \mathcal{T}$,
 we carry out a two-sided paired-samples t-test of
\[H_0: \qquad \mathbb{E}[\alpha^{P_i}_{t}] = E[\alpha^{P_j}_t] \]
\[H_a: \qquad \mathbb{E}[\alpha^{P_i}_{t}] \neq E[\alpha^{P_j}_t] \]
for pruning algorithms
$P_i,P_j \in \{ \text{MP}, \text{GP}, \text{UP}, \text{RP} \}$ on CIFAR-10 with ResNet-56. Without clear a priori expectations for which algorithms would experience more intensification than others, we compare all pairs and use two-sided tests. We also compared just MP and UP at all models in ResNet-$\{20,32,44,56,110\}$.

Figure~\ref{fig:algo_boxplots} and  Figure~\ref{fig:algo_scatterplots} suggest a trend in which, at most ratios, UP tends to have shallower but still positive slopes (less intensification) than MP or GP, while RP always has negative slopes. 
Table~\ref{table:algo_p_val} confirms we have evidence that the average slopes for MP, GP, and UP differ from each other at higher pruning ratios, but not necessarily at the lowest ratios. We also have strong evidence that all three methods differ from RP, at each ratio. Moreover, comparing Figure~\ref{fig:model_scatterplots_UP} to Figure~\ref{fig:model_scatterplots} suggests that UP tends to have less intensification than MP at each $t$ and $M$. Table~\ref{table:algo_p_val_MPvsUP} confirms this for the larger ratios and models.

We can also observe by comparing Figure~\ref{fig:algo_scatterplots} and Figure~\ref{fig:model_scatterplots_UP}  
that $\hat{\alpha}$ is always smaller for UP, whereas $\bar{A}$ is only slightly greater for MP in 3 out of 20 cases: ResNet-20 with $t=4$, ResNet-110 with $t=2$, and ResNet-110 with $t=4$.

\begin{figure}
    \centering
    \includegraphics{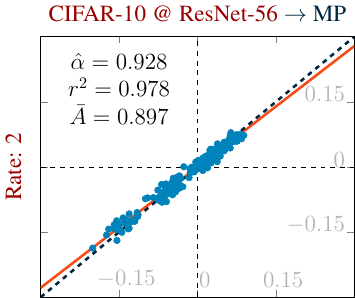}
    \includegraphics{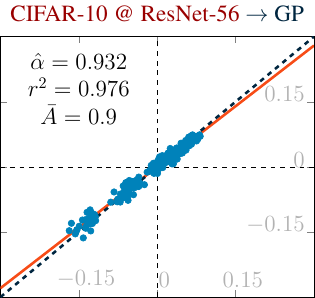}
    \includegraphics{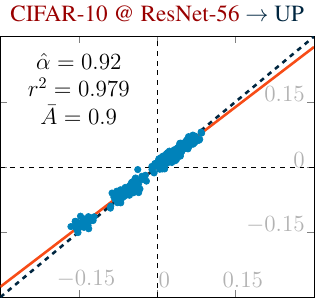}
    \includegraphics{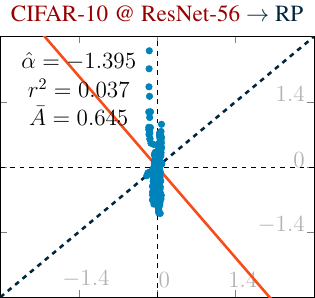}
    \includegraphics{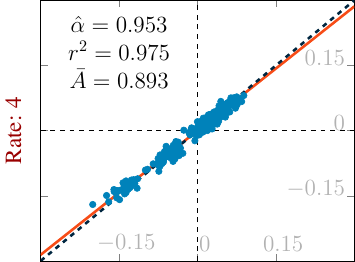}
    \includegraphics{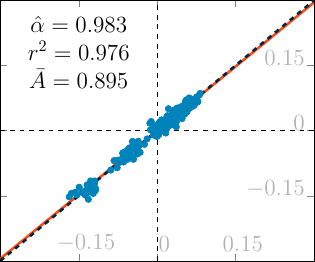}
    \includegraphics{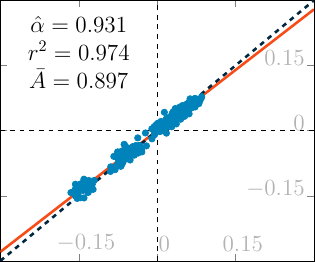}
    \includegraphics{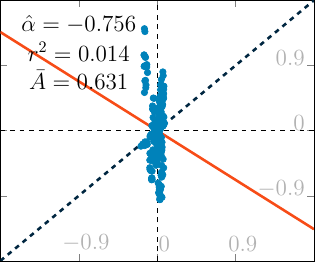}
    \includegraphics{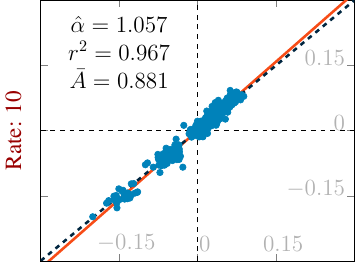}
    \includegraphics{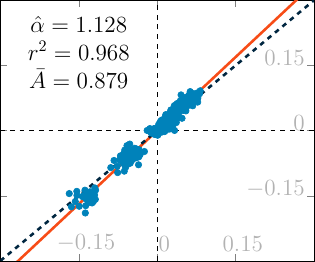}
    \includegraphics{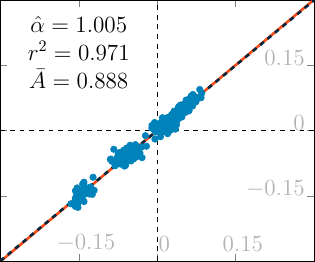}
    \includegraphics{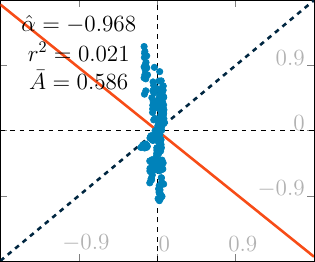}
    \includegraphics{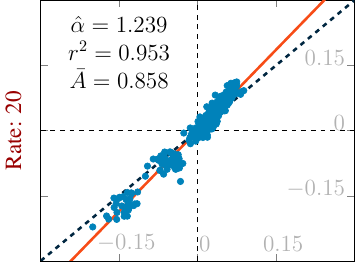}
    \includegraphics{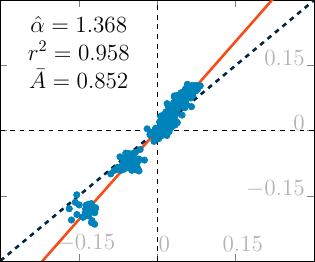}
    \includegraphics{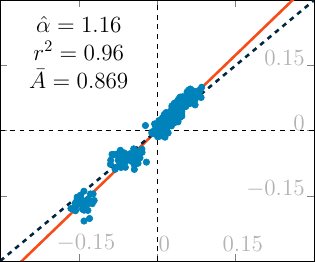}
    \includegraphics{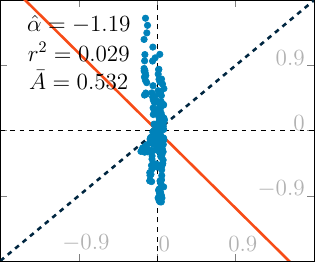}
    \includegraphics{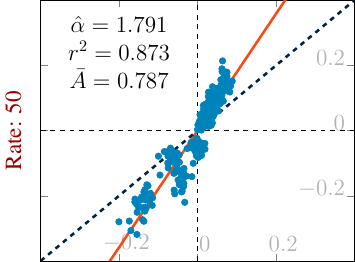}
    \includegraphics{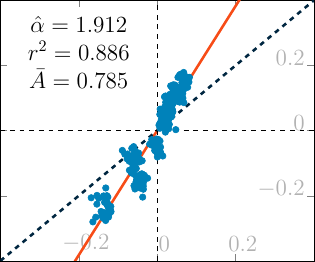}
    \includegraphics{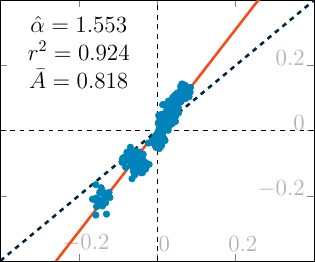}
    \includegraphics{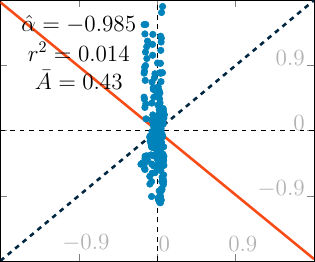}
    \caption{Scatterplot matrix of $\bar{B}^c(m)$ ($x$-axis) vs $\bar{B}^c_t(m)$ ($y$-axis), at several values of $t$ (rows) and $P$ (columns). Each scatterplot point corresponds to one $c$ for one $m$. See Section~\ref{sec:stat_analysis}.}
    \label{fig:algo_scatterplots}
\end{figure}

\section{Supplemental discussion and experiments}

Following the advice from the anonymous reviewers, we have augmented our discussion and the experiments carried out in the paper. 
First, 
we consider the tradeoff between accuracy and intensification at lower pruning ratios to discuss the operationalization of our results in Appendix~\ref{ap:tradeoff}.
Second, 
we report the variance of recall balance in Appendix~\ref{ap:variance}. 
Third, 
we discuss at greater length the use of $\alpha$ as our metric of interest in Appendix~\ref{ap:alpha}.
Finally, we included additional results comparing accuracy and intensification for two recent approaches, LTH~\citep{frankle2019lottery} and CHIP~\citep{sui2021chip}, in Appendix~\ref{ap:modern}.

\section{Conclusion}

In this work, 
we have found evidence that network pruning may cause recall distortion even in models trained on seemingly balanced datasets, thereby complementing prior studies with underrepresented classes and features~\citep{hooker2019forget,paganini2020responsibly,hooker2020bias,ahia2021translation,stoychev2022effect,tran2022disparate}.
In our experiments, we have observed a statistically significant effect of higher pruning ratios, increased task difficulty, and greater model complexity  intensifying the effect of normalized recall balance --- the difference between recall and accuracy divided by accuracy. 
This happens for both underperforming classes (recall below accuracy) and overperforming classes (recall above accuracy) for any pruning method other than Random Pruning (RP), 
hence showing that accuracy reduction alone does not lead to intensification. 
More surprisingly, however, is that we have observed the opposite phenomenon at lower pruning ratios. Namely, we have observed a de-intensification effect, by which model accuracy improves at the same time that the differences between class recall and model accuracy goes down. To the best of our knowledge, no other work has identified this corrective effect as a positive externality of moderate pruning.

We have also introduced Undecayed Pruning~(UP) as a variant of Gradient Pruning~(GP) that discounts the effect of weight decay from the loss function, leading to an algorithm that is more closely aligned with the better acclaimed Magnitude Pruning~(MP). 
Since UP consists of a fine adjustment between the MP and GP approaches, we are not surprised that this leads only to a minor improvement. Nevertheless, it shows that a better alignment of the pruning method with the loss function alleviates recall balance. 
We may explore in future how to deduct other forms of regularization before pruning. 

One possible limitation of our study is that it does not include a vast list of pruning algorithms, which would nevertheless be prohibitive for a comprehensive statistical evaluation. However, it emphasizes the classic methods conventionally used and sheds light on how they can be combined through our new algorithm. 
Nevertheless, we believe that the insight on how to develop better variations of simple algorithms such as MP and GP helps us understand how to design more complex network pruning algorithms as well. 
In fact, additional experiments carried out by suggestion of the anonymous reviewers show that the intensification effect can also be observed in modern pruning methods.

\begin{ack}
We would like to thank the anonymous reviewers for the constructive discussion, including their suggestions to expand the scope of our work as well as to consider future directions of work.

Aidan Good, Jiaqi Lin, Hannah Sieg, and Thiago Serra were supported by the National Science Foundation (NSF) grant
IIS 2104583. Hannah Sieg was supported by the H. Royer Undergraduate Research Fund.
Mikey Ferguson was supported by Bucknell University's Presidential Fellowship.
Xin Yu was partially supported by the National Science Foundation (NSF) grant IIS 1764071. Shandian Zhe was supported by the National Science Foundation (NSF) grant IIS 1910983.
\end{ack}

{
\small
\bibliographystyle{plain}
\bibliography{references}
}

\FloatBarrier
\pagebreak

\section*{Checklist}

The checklist follows the references.  Please
read the checklist guidelines carefully for information on how to answer these
questions.  For each question, change the default \answerTODO{} to \answerYes{},
\answerNo{}, or \answerNA{}.  You are strongly encouraged to include a {\bf
justification to your answer}, either by referencing the appropriate section of
your paper or providing a brief inline description.  For example:
\begin{itemize}
  \item Did you include the license to the code and datasets? \answerYes{See 
  Instruction}
  \item Did you include the license to the code and datasets? \answerNo{The code and the data are proprietary.}
  \item Did you include the license to the code and datasets? \answerNA{}
\end{itemize}
Please do not modify the questions and only use the provided macros for your
answers.  Note that the Checklist section does not count towards the page
limit.  In your paper, please delete this instructions block and only keep the
Checklist section heading above along with the questions/answers below.

\begin{enumerate}

\item For all authors...
\begin{enumerate}
  \item Do the main claims made in the abstract and introduction accurately reflect the paper's contributions and scope?
    \answerYes{}{}
  \item Did you describe the limitations of your work?
    \answerYes{}
  \item Did you discuss any potential negative societal impacts of your work?
    \answerYes{}
  \item Have you read the ethics review guidelines and ensured that your paper conforms to them?
    \answerYes{}
\end{enumerate}

\item If you are including theoretical results...
\begin{enumerate}
  \item Did you state the full set of assumptions of all theoretical results?
    \answerYes{}
        \item Did you include complete proofs of all theoretical results?
    \answerYes{}
\end{enumerate}

\item If you ran experiments...
\begin{enumerate}
  \item Did you include the code, data, and instructions needed to reproduce the main experimental results (either in the supplemental material or as a URL)?
    \answerYes{See the supplemental material.}
  \item Did you specify all the training details (e.g., data splits, hyperparameters, how they were chosen)?
    \answerYes{See Section 6.1.}
        \item Did you report error bars (e.g., with respect to the random seed after running experiments multiple times)?
    \answerYes{See Section 6.1.}
        \item Did you include the total amount of compute and the type of resources used (e.g., type of GPUs, internal cluster, or cloud provider)?
    \answerYes{See Section6.1}
\end{enumerate}

\item If you are using existing assets (e.g., code, data, models) or curating/releasing new assets...
\begin{enumerate}
  \item If your work uses existing assets, did you cite the creators?
    \answerYes{}
  \item Did you mention the license of the assets?
    \answerNA{}
  \item Did you include any new assets either in the supplemental material or as a URL?
    \answerYes{Code}
  \item Did you discuss whether and how consent was obtained from people whose data you're using/curating?
    \answerNA{}
  \item Did you discuss whether the data you are using/curating contains personally identifiable information or offensive content?
    \answerNA{}
\end{enumerate}

\item If you used crowdsourcing or conducted research with human subjects...
\begin{enumerate}
  \item Did you include the full text of instructions given to participants and screenshots, if applicable?
    \answerNA{}
  \item Did you describe any potential participant risks, with links to Institutional Review Board (IRB) approvals, if applicable?
    \answerNA{}
  \item Did you include the estimated hourly wage paid to participants and the total amount spent on participant compensation?
    \answerNA{}
\end{enumerate}

\end{enumerate}

\pagebreak

\appendix

\section{Tables of $p$-values for the hypothesis tests}\label{ap:tables}

\begin{table}[h!]
\centering
\caption{p-values for paired-samples t-tests of $H_0: \mathbb{E}[\alpha^{M}_{t_i}]\geq \mathbb{E}[\alpha^{M}_{t_{i+1}}]$ vs.\ $H_a: \mathbb{E}[\alpha^{M}_{t_i}] < \mathbb{E}[\alpha^{M}_{t_{i+1}}]$ within each architecture $M$, for $i=1,2,3,4$ using CIFAR-10 and MP, Bonferroni-corrected by column.}
\begin{tabular}{ l | r r r r r}
 Ratios & ResNet-20 & ResNet-32 & ResNet-44 & ResNet-56 & ResNet-110 \\
 \hline
  2 vs  4 & <0.001 &  0.001 &  0.708 &  0.036 &  0.079 \\ 
  4 vs 10 & <0.001 & <0.001 & <0.001 & <0.001 & <0.001 \\ 
 10 vs 20 & <0.001 & <0.001 & <0.001 & <0.001 & <0.001 \\ 
 20 vs 50 & <0.001 & <0.001 & <0.001 & <0.001 & <0.001
\end{tabular}
\label{table:ratio_p_val_by_model_MP}
\end{table}

\begin{table}[h!]
\centering
\caption{p-values for paired-samples t-tests of $H_0: \mathbb{E}[\alpha^{M}_{t_i}]\geq \mathbb{E}[\alpha^{M}_{t_{i+1}}]$ vs.\ $H_a: \mathbb{E}[\alpha^{M}_{t_i}] < \mathbb{E}[\alpha^{M}_{t_{i+1}}]$ within each architecture $M$, for $i=1,2,3,4$ using CIFAR-10 and UP, Bonferroni-corrected by column.}
\begin{tabular}{ l | r r r r r}
 Ratios & ResNet-20 & ResNet-32 & ResNet-44 & ResNet-56 & ResNet-110 \\
 \hline
  2 vs  4 &  0.029 &  0.015 &  0.002 &  0.885 &  0.100 \\ 
  4 vs 10 & <0.001 & <0.001 & <0.001 & <0.001 & <0.001 \\ 
 10 vs 20 & <0.001 & <0.001 & <0.001 & <0.001 & <0.001 \\ 
 20 vs 50 & <0.001 & <0.001 & <0.001 & <0.001 & <0.001
\end{tabular}
\label{table:ratio_p_val_by_model_UP}
\end{table}

\begin{table}[h!]
\centering
\caption{p-values for paired-samples t-tests of $H_0: \mathbb{E}[\alpha^{D}_{t_i}]\geq \mathbb{E}[\alpha^{D}_{t_{i+1}}]$ vs.\ $H_a: \mathbb{E}[\alpha^{D}_{t_i}] < \mathbb{E}[\alpha^{D}_{t_{i+1}}]$ within each dataset $D$, for $i=1,2,3,4$ using MP, Bonferroni-corrected by column.}
\begin{tabular}{ l | r r r r}
 Ratios & MNIST & Fashion & CIFAR-10 & CIFAR-100 \\
 \hline
  2 vs  4 &  0.389 &  1.000 &  0.036 & <0.001 \\ 
  4 vs 10 &  1.000 &  0.700 & <0.001 & <0.001 \\  
 10 vs 20 &  0.071 &  0.001 & <0.001 & <0.001 \\
 20 vs 50 & <0.001 & <0.001 & <0.001 & <0.001  
\end{tabular}
\label{table:ratio_p_val_by_dataset}
\end{table}

\begin{table}[h!]
\centering
\caption{p-values for paired-samples t-tests of $H_0: \mathbb{E}[\alpha^{P}_{t_i}]\geq \mathbb{E}[\alpha^{P}_{t_{i+1}}]$ vs.\ $H_a: \mathbb{E}[\alpha^{P}_{t_i}] < \mathbb{E}[\alpha^{P}_{t_{i+1}}]$ within each pruning algorithm $P$, for $i=1,2,3,4$ on CIFAR-10 at ResNet-56, Bonferroni-corrected by column.}
\begin{tabular}{ l | r r r r}
 Ratios & MP & GP & UP & RP \\
 \hline
  2 vs  4 &  0.036 & <0.001 &  0.885 & 0.034 \\ 
  4 vs 10 & <0.001 & <0.001 & <0.001 & 1.000 \\ 
 10 vs 20 & <0.001 & <0.001 & <0.001 & 1.000 \\ 
 20 vs 50 & <0.001 & <0.001 & <0.001 & 1.000
\end{tabular}
\label{table:ratio_p_val_by_algo}
\end{table}

\begin{table}[h!]
\centering
\caption{p-values for independent-samples t-tests of $H_0: \mathbb{E}[\alpha^{M_i}_{t}] \leq E[\alpha^{M_{i+1}}_{t}]$ vs.\ $H_a: \mathbb{E}[\alpha^{M_i}_{t}] > E[\alpha^{M_{i+1}}_{t}]$ within each ratio $t$, for $i=1,2,3,4$ using CIFAR-10 and MP, Bonferroni-corrected by column.}
\begin{tabular}{ l | r r r r r}
 ResNet Sizes & $t=2$ & $t=4$ & $t=10$ & $t=20$ & $t=50$ \\
 \hline
 20 vs  32  & 1.000 & 0.662 & 0.021 &  0.002 & <0.001\\ 
 32 vs  44  & 1.000 & 0.291 & 0.125 & <0.001 & <0.001 \\ 
 44 vs  56  & 0.014 & 0.147 & 0.002 & <0.001 & <0.001 \\ 
 56 vs 110  & 1.000 & 1.000 & 0.410 &  0.121 &  0.608
\end{tabular}
\label{table:model_p_val_MP}
\end{table}

\begin{table}[h!]
\centering
\caption{p-values for independent-samples t-tests of $H_0: \mathbb{E}[\alpha^{M_i}_{t}] \leq E[\alpha^{M_{i+1}}_{t}]$ vs.\ $H_a: \mathbb{E}[\alpha^{M_i}_{t}] > E[\alpha^{M_{i+1}}_{t}]$ within each ratio $t$, for $i=1,2,3,4$ using CIFAR-10 and UP, Bonferroni-corrected by column.}
\begin{tabular}{ l | r r r r r}
 ResNet Sizes & $t=2$ & $t=4$ & $t=10$ & $t=20$ & $t=50$ \\
 \hline
 20 vs  32  & 1.000 & 1.000 & 0.005 & <0.001 & <0.001 \\ 
 32 vs  44  & 0.063 & 0.057 & 0.085 &  0.005 & <0.001 \\ 
 44 vs  56  & 1.000 & 0.484 & 0.005 &  0.004 & <0.001 \\ 
 56 vs 110  & 0.007 & 0.275 & 0.153 & <0.001 & <0.001 
\end{tabular}
\label{table:model_p_val_UP}
\end{table}

\begin{table}[h!]
\centering
\caption{p-values for independent-samples t-tests of $H_0: \mathbb{E}[\alpha^{D_i}_{t}] \geq E[\alpha^{D_j}_{t}]$ vs.\ $H_a: \mathbb{E}[\alpha^{D_i}_{t}] < E[\alpha^{D_j}_{t}]$ within each ratio $t$, for three dataset pairs $(D_i, D_j)$ using MP, Bonferroni-corrected by column.}
\begin{tabular}{ l | r r r r r}
 Datasets & $t=2$ & $t=4$ & $t=10$ & $t=20$ & $t=50$ \\
 \hline
 MNIST vs CIFAR-10     & <0.001 & <0.001 & <0.001 & <0.001 & 1.000 \\ 
 Fashion vs CIFAR-10   &  1.000 &  1.000 & <0.001 & <0.001 & 1.000 \\  
 CIFAR-10 vs CIFAR-100 &  0.380 &  0.500 &  0.059 &  0.001 & 0.025
\end{tabular}
\label{table:dataset_p_val}
\end{table}

\begin{table}[h!]
\centering
\caption{p-values for paired-samples t-tests of $H_0: \mathbb{E}[\alpha^{P_i}_{t}] = E[\alpha^{P_j}_{t}]$ vs.\ $H_a: \mathbb{E}[\alpha^{P_i}_{t}] \neq E[\alpha^{P_j}_{t}]$ within each ratio $t$, for all algorithm pairs $(P_i,P_j)$ on CIFAR-10 and ResNet-56, Bonferroni-corrected by column.}
\begin{tabular}{ l | r r r r r}
 Methods & $t=2$ & $t=4$ & $t=10$ & $t=20$ & $t=50$ \\
 \hline
 MP vs GP &  1.000 &  0.286 &  0.003 & <0.001 &  0.037 \\ 
 MP vs UP &  1.000 &  0.664 &  0.004 &  0.004 & <0.001 \\ 
 GP vs UP &  1.000 &  0.003 & <0.001 & <0.001 & <0.001 \\ 
 MP vs RP & <0.001 & <0.001 &  0.001 &  0.001 &  0.003 \\
 GP vs RP & <0.001 & <0.001 & <0.001 &  0.001 &  0.002 \\ 
 UP vs RP & <0.001 & <0.001 &  0.001 &  0.002 &  0.008 
\end{tabular}
\label{table:algo_p_val}
\end{table}

%
\begin{table}[h!]
\centering
\caption{p-values for paired-samples t-tests of $H_0: \mathbb{E}[\alpha^{M}_{t,P_i}] = E[\alpha^{M}_{t,P_j}]$ vs.\ $H_a: \mathbb{E}[\alpha^{M}_{t,P_i}] \neq E[\alpha^{M}_{t,P_j}]$ within each rate $t$ and architecture $M$ for CIFAR-10, always comparing only algorithm pairs $(P_i,P_j)=(\text{MP},\text{UP})$. Unlike the other tables, these p-values are Bonferroni-corrected for all 25 comparisons at once.}
\begin{tabular}{ l | r r r r r}
 Rate & ResNet-20 & ResNet-32 & ResNet-44 & ResNet-56 & ResNet-110 \\
 \hline
  2 &  1.000 &  1.000 &  0.001 &  1.000 &  0.011 \\ 
  4 &  0.097 &  1.000 &  0.526 &  1.000 &  0.015 \\ 
 10 &  0.125 &  0.098 &  0.038 &  0.010 &  0.005 \\ 
 20 & <0.001 & <0.001 &  0.001 &  0.015 & <0.001 \\ 
 50 &  0.003 &  0.051 &  0.001 & <0.001 & <0.001
\end{tabular}
\label{table:algo_p_val_MPvsUP}
\end{table}

\clearpage

\section{Additional scatterplots}\label{ap:scatter}

\begin{figure}[h!]
    \centering
    \includegraphics{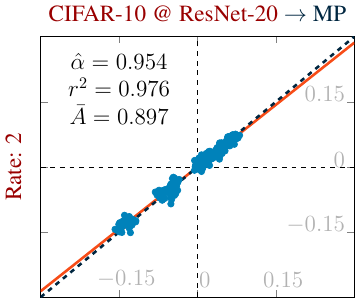}
    \includegraphics{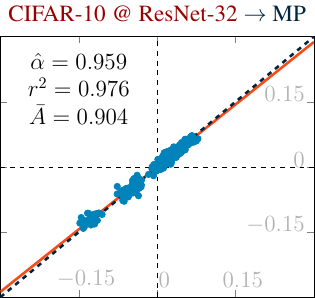}
    \includegraphics{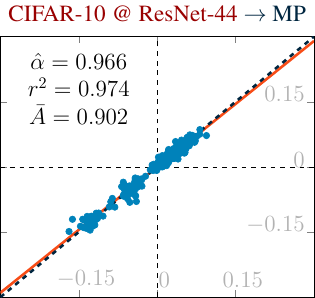}
    \includegraphics{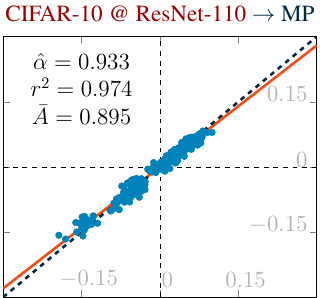}
    \includegraphics{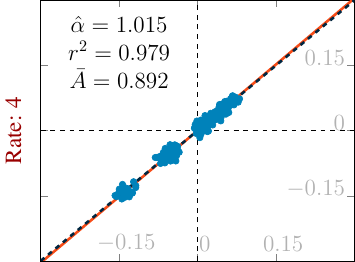}
    \includegraphics{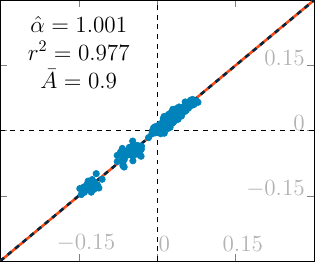}
    \includegraphics{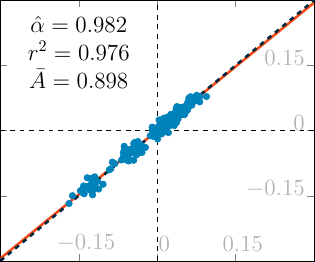}
    \includegraphics{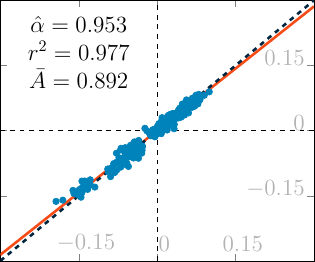}
    \includegraphics{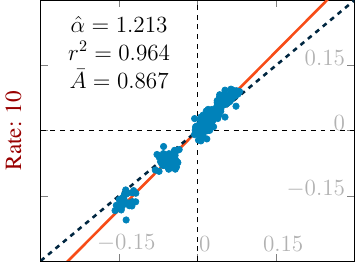}
    \includegraphics{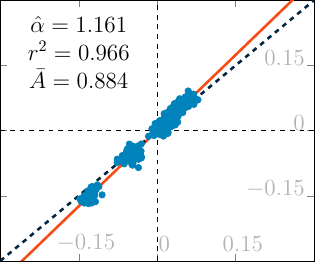}
    \includegraphics{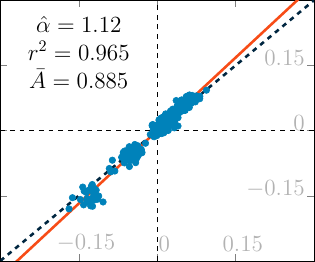}
    \includegraphics{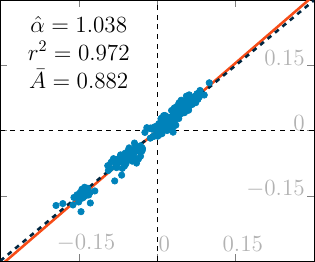}
    \includegraphics{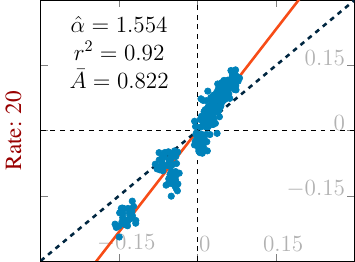}
    \includegraphics{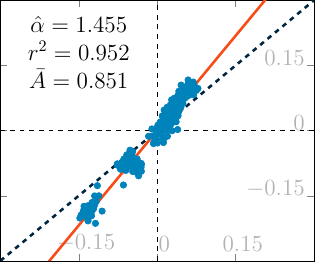}
    \includegraphics{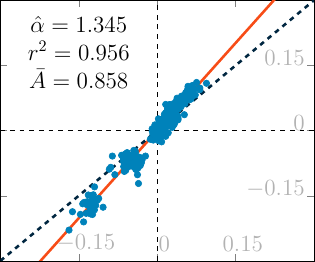}
    \includegraphics{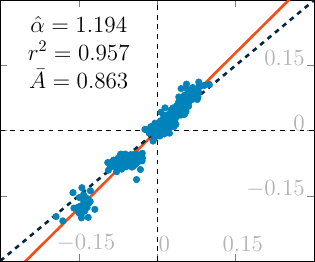}
    \includegraphics{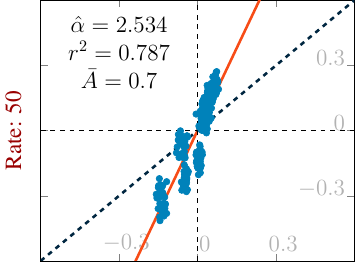}
    \includegraphics{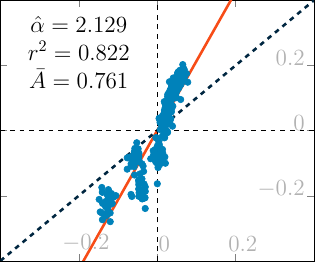}
    \includegraphics{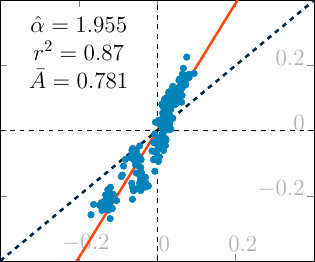}
    \includegraphics{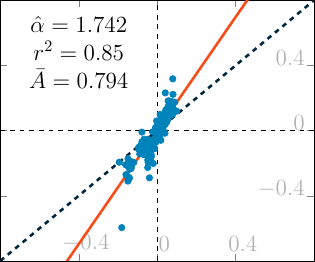}
    \caption{Scatterplot matrix of $\bar{B}^c(m)$ ($x$-axis) vs $\bar{B}^c_t(m)$ ($y$-axis), at several values of $t$ (rows) and $M$ (columns) for $P =$ MP. Each scatterplot point corresponds to one $c$ for one $m$. See Section~\ref{sec:stat_analysis}.}
    \label{fig:model_scatterplots}
\end{figure}

\begin{figure}[h!]
    \centering
    \includegraphics{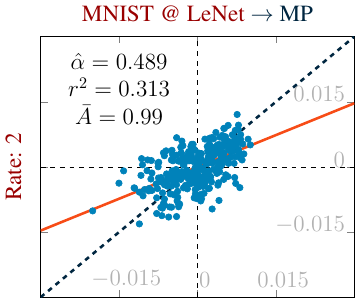}
    \includegraphics{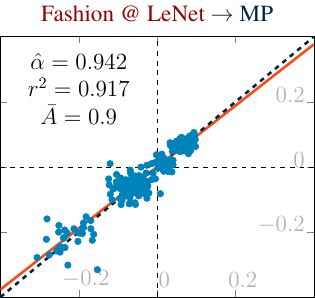}
    \includegraphics{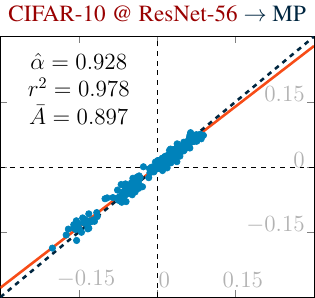}
    \includegraphics{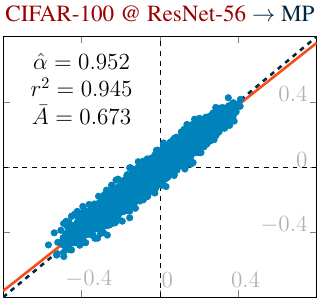}
    \includegraphics{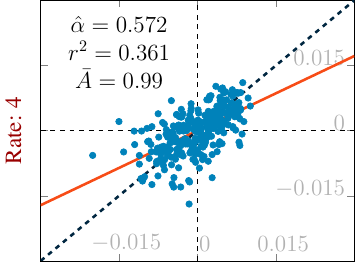}
    \includegraphics{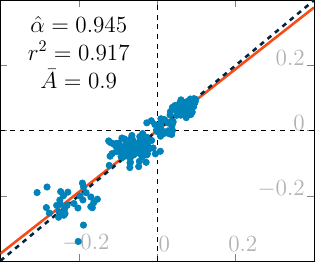}
    \includegraphics{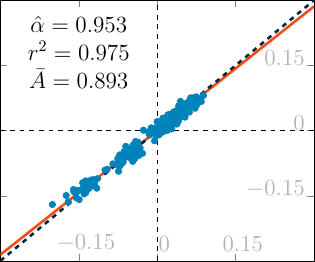}
    \includegraphics{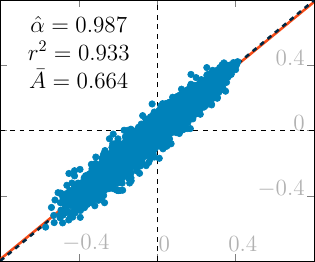}
    \includegraphics{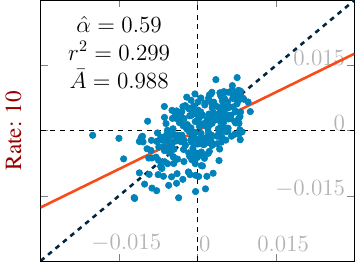}
    \includegraphics{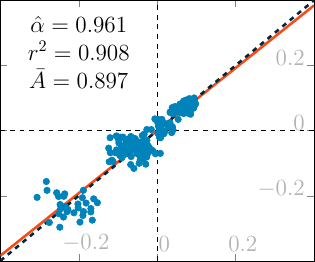}
    \includegraphics{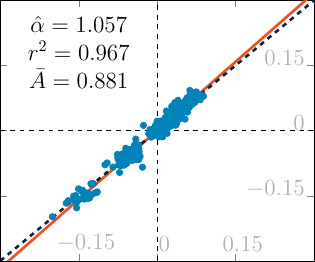}
    \includegraphics{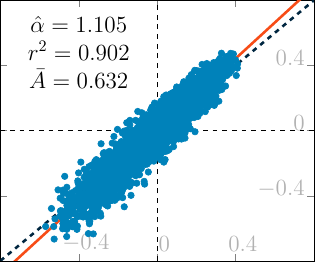}
    \includegraphics{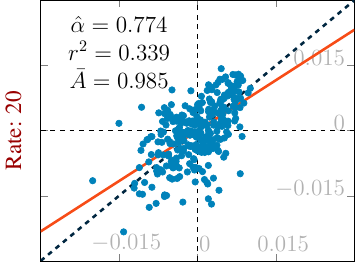}
    \includegraphics{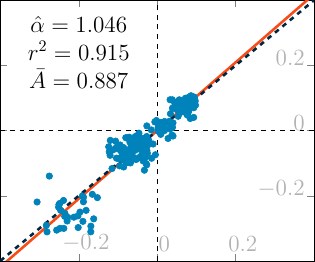}
    \includegraphics{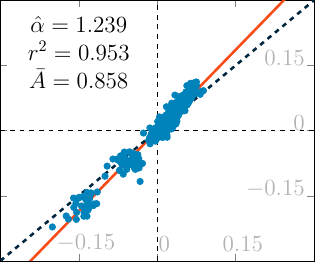}
    \includegraphics{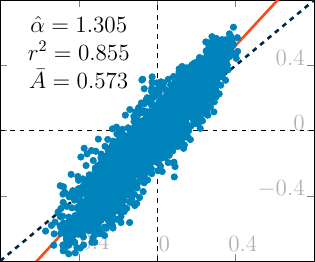}
    \includegraphics{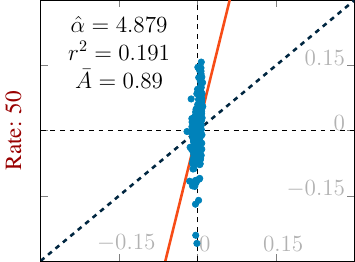}
    \includegraphics{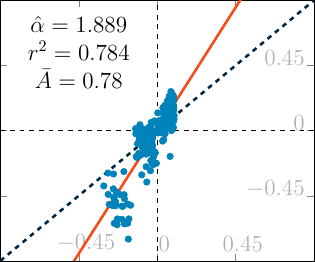}
    \includegraphics{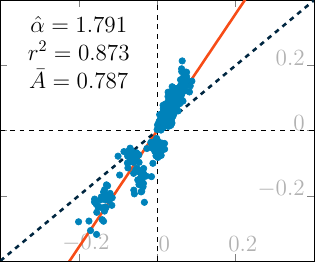}
    \includegraphics{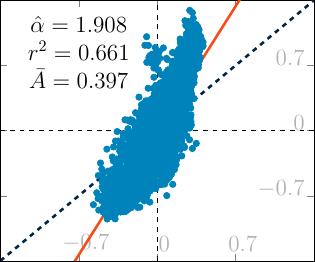}
    \caption{Scatterplot matrix of $\bar{B}^c(m)$ ($x$-axis) vs $\bar{B}^c_t(m)$ ($y$-axis), at several values of $t$ (rows) and $D$ (columns). Each scatterplot point corresponds to one $c$ for one $m$. See Section~\ref{sec:stat_analysis}.}
    \label{fig:dataset_scatterplots}
\end{figure}

\begin{figure}[h!]
    \centering
    \includegraphics{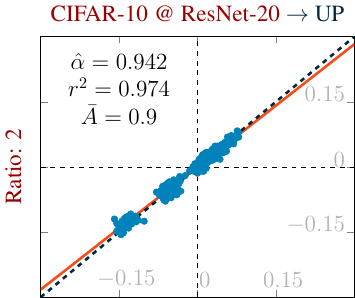}
    \includegraphics{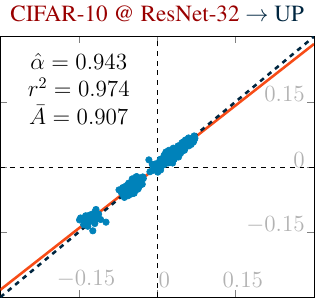}
    \includegraphics{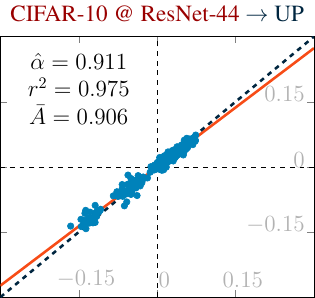}
    \includegraphics{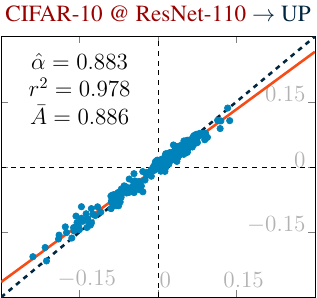}
    \includegraphics{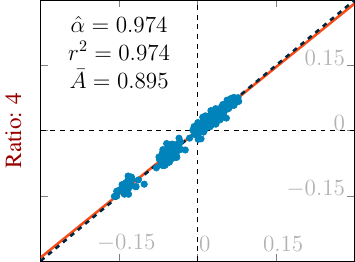}
    \includegraphics{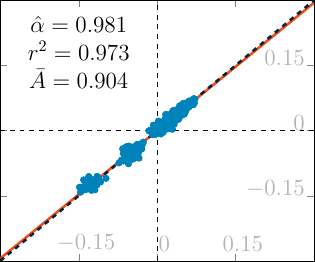}
    \includegraphics{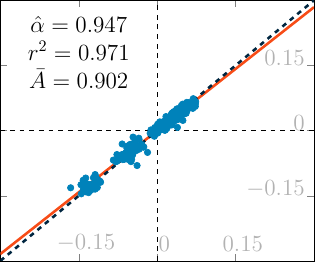}
    \includegraphics{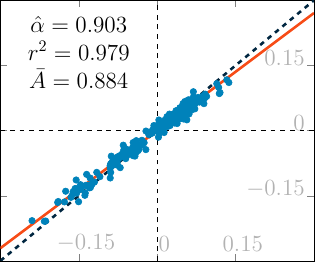}
    \includegraphics{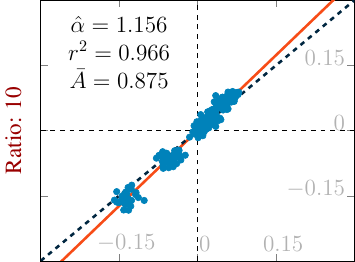}
    \includegraphics{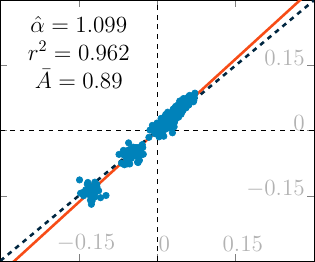}
    \includegraphics{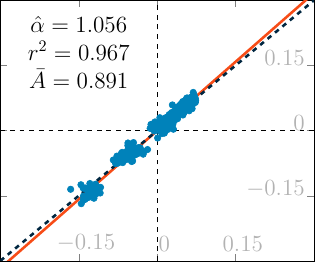}
    \includegraphics{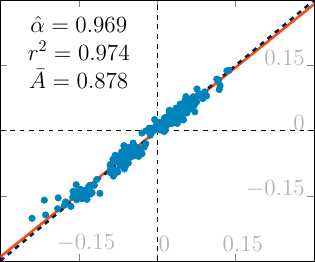}
    \includegraphics{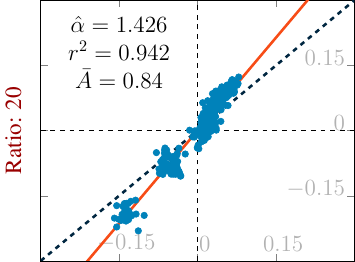}
    \includegraphics{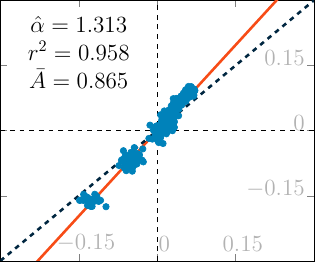}
    \includegraphics{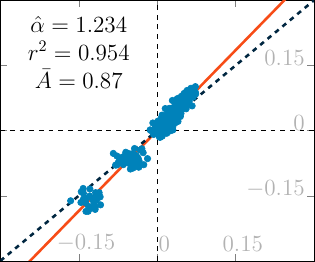}
    \includegraphics{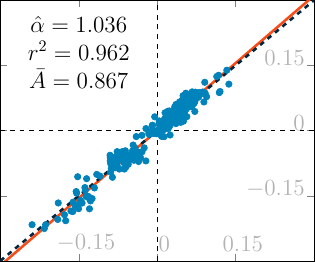}
    \includegraphics{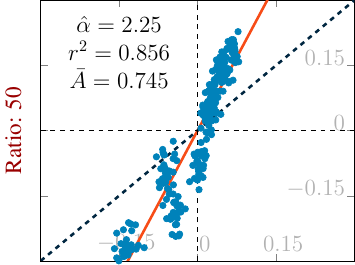}
    \includegraphics{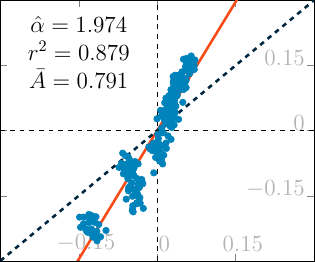}
    \includegraphics{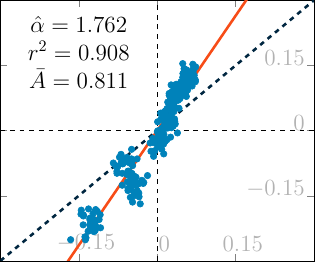}
    \includegraphics{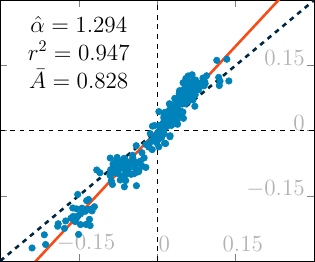}
    \caption{Scatterplot matrix of $\bar{B}^c(m)$ ($x$-axis) vs $\bar{B}^c_t(m)$ ($y$-axis), at several values of $t$ (rows) and $M$ (columns) for $P =$ UP. Each scatterplot point corresponds to one $c$ for one $m$. See Section~\ref{sec:stat_analysis}.}
    \label{fig:model_scatterplots_UP}
\end{figure}

\clearpage

\section{Confidence intervals for each boxplot}\label{ap:ci}

\begin{figure}[h!]
    \centering
    \includegraphics{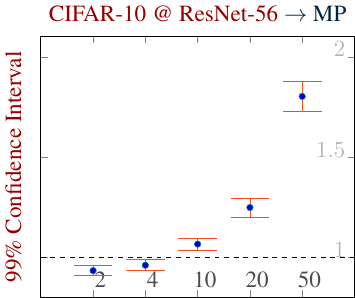}
    \includegraphics{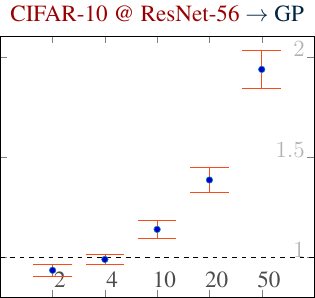}
    \includegraphics{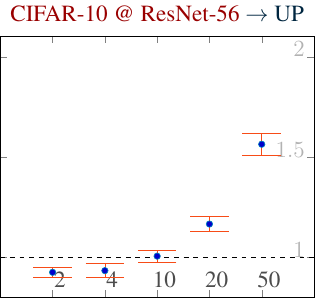}
    \includegraphics{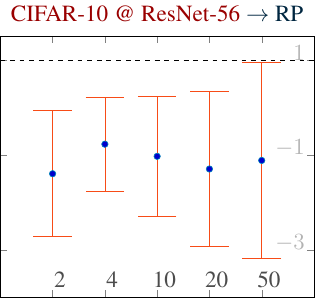}
    \caption{99\% confidence intervals for $\alpha^{D,M}_{t,P}$ at each $t$ within each $P$ associated with Figure~\ref{fig:algo_boxplots}.}
    \label{fig:algo_ci99}
\end{figure}

\begin{figure}[h!]
    \centering
    \includegraphics{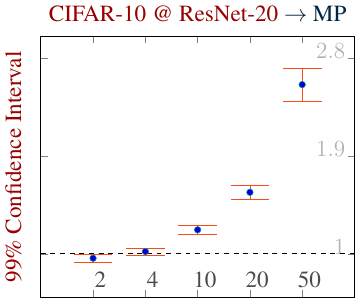}
    \includegraphics{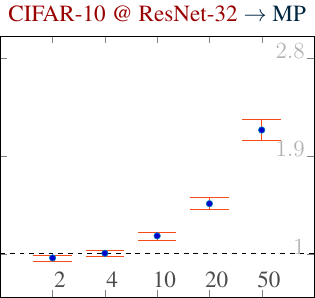}
    \includegraphics{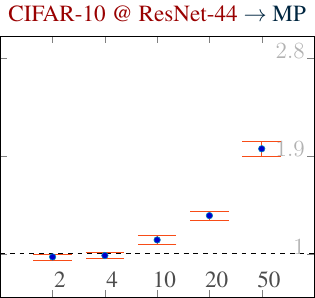}
    \includegraphics{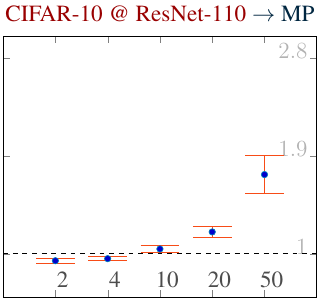}
    \caption{99\% confidence intervals for $\alpha^{D,M}_{t,P}$ at each $t$ within each $M$ for $P$ = MP associated with Figure~\ref{fig:model_boxplots}.}
    \label{fig:model_ci99}
\end{figure}

\begin{figure}[h!]
    \centering
    \includegraphics{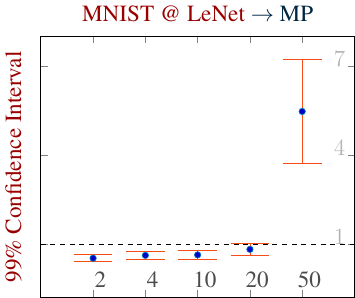}
    \includegraphics{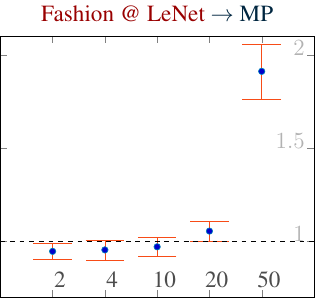}
    \includegraphics{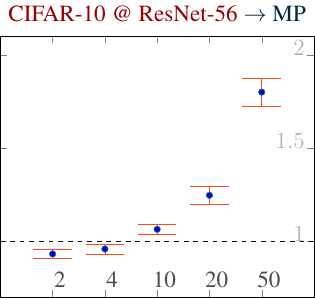}
    \includegraphics{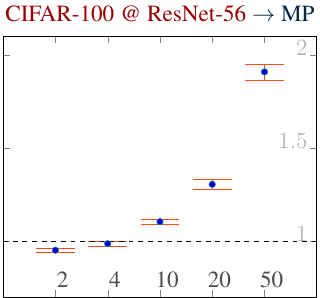}
    \caption{99\% confidence intervals for $\alpha^{D,M}_{t,P}$ at each $t$ within each $D$ associated with Figure~\ref{fig:dataset_boxplots}.}
    \label{fig:dataset_ci99}
\end{figure}

\begin{figure}[h!]
    \centering
    \includegraphics{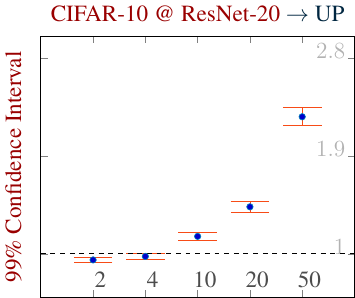}
    \includegraphics{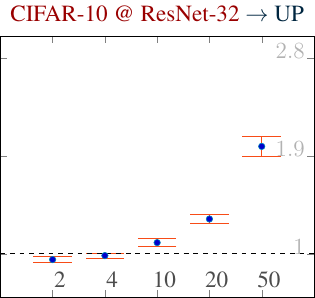}
    \includegraphics{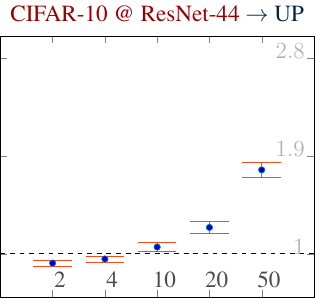}
    \includegraphics{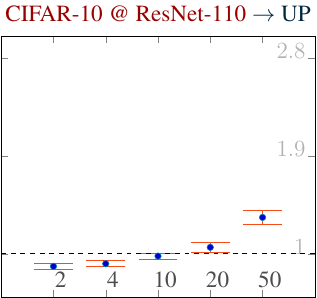}
    \caption{99\% confidence intervals for $\alpha^{D,M}_{t,P}$ at each $t$ within each $M$ for $P$=UP associated with Figure~\ref{fig:up_boxplots}.}
    \label{fig:up_ci99}
\end{figure}

\clearpage

\section{Mean accuracy before training}\label{ap:accuracy}

\begin{table}[h!]
    \centering
        \caption{Mean accuracy before pruning of the models used for each set of experiments.}
    \begin{tabular}{l l l | r}
        Model & Dataset & Pruning Algorithm & Accuracy \\
        \hline
        LeNet & MNIST & MP & 0.989 \\ 
        LeNet & Fashion & MP & 0.899 \\
        ResNet-20 & CIFAR-10 & MP & 0.896 \\
        ResNet-20 & CIFAR-10 & UP & 0.896 \\
        ResNet-32 & CIFAR-10 & MP & 0.903 \\
        ResNet-32 & CIFAR-10 & UP & 0.903 \\
        ResNet-44 & CIFAR-10 & MP & 0.900 \\
        ResNet-44 & CIFAR-10 & UP & 0.901 \\
        ResNet-56 & CIFAR-10 & MP & 0.893 \\
        ResNet-56 & CIFAR-10 & GP & 0.896 \\
        ResNet-56 & CIFAR-10 & UP & 0.893 \\
        ResNet-56 & CIFAR-10 & RP & 0.893 \\
        ResNet-56 & CIFAR-100 & MP & 0.671 \\
        ResNet-56 & CIFAR-100 & UP & 0.670 \\
        ResNet-110 & CIFAR-10 & MP & 0.889 \\
        ResNet-110 & CIFAR-10 & UP & 0.876 \\
    \end{tabular}
    \label{tab:acurracy}
\end{table}


\section{Tradeoff between recall distortion and accuracy}\label{ap:tradeoff}

The supplemental results in this section illustrate the tradeoff between recall distortion and  accuracy, which are illustrated in Figure~\ref{fig:tradeoff}. We refine our study in lower pruning ratios by evaluating 30 models at ratios 2, 4, 6, 8, and 10. In those plots, the blue curve associated with the left y-axis represents the mean accuracy at each compression ratio, with the initial observation at compression ratio 1 corresponding to the model accuracy before pruning. The orange curve associated with the right y-axis represents the mean intensification at each compression ratio. We have aligned both y-axes so that the center of the left axis represents the mean accuracy before pruning and the right axis represents an intensification ratio of 1, and then a dashed horizontal line is drawn at the center of the plot. Whenever we see the blue plot above that line and the orange plot below that line, which is typical for the lower pruning ratios, we are observing accuracy going up while intensification is going down.

We observe some agreement between recall distortion and accuracy for the pruning ratios at which both are more beneficial. More specifically, we observe model accuracy improving at the same time that intensification is reduced if a small pruning ratios are used. These plots suggest that intensification could be another axis along which pruning methods should be evaluated. When the pruning ratio increases, model accuracy and intensification no longer move in the same direction. Whereas heavy pruning makes the accuracy worse and intensification stronger, lighter pruning makes the accuracy better while not pushing intensification above 1.

We believe that these results are actionable to the extent that they encourage the use of a moderate amount of pruning both for the sake of improving generalization as well as to reduce the performance balance across classes. Moreover, these results provide a qualitative understanding of how to adjust for different cases. Namely, if we work with a comparatively more complex task we should compensate with a larger model or a lower pruning ratio if we would like to obtain a pruned model with similar improvements for both metrics.

\newcommand{\aiPlot}[3]{
\begin{tikzpicture}[scale=0.465]
\begin{axis}[xmin=0,xmax=11,
ymin=#2,ymax=#3,
title={CIFAR-10 @ ResNet-#1 $\rightarrow$ MP}, title style={font=\Large, cred},
ylabel=Accuracy, ylabel style={font=\Large, bblue},
xlabel=Pruning ratio, xlabel style={font=\Large},]
\addplot [ultra thick, bblue, error bars/.cd, y dir=both, y explicit]
    table [x=R,y=ACC,y error=ACC_E,col sep=comma] {AI-#1.txt};
\end{axis}
\begin{axis}[xmin=0,xmax=11,axis y line*=right,ymin=0.8,ymax=1.2,ylabel near ticks,
ylabel=Intensification $\alpha$, ylabel style={font=\Large, borange}, ]
\addplot [ultra thick, borange, error bars/.cd, y dir=both, y explicit]
    table [x index=0,y index=3,y error index=4,col sep=comma,skip first n=2] {AI-#1.txt};
\addplot[dashed, mark=none, black, samples=2, domain=0:11] {1};
\end{axis}
\end{tikzpicture}
}

\begin{figure}[h!]
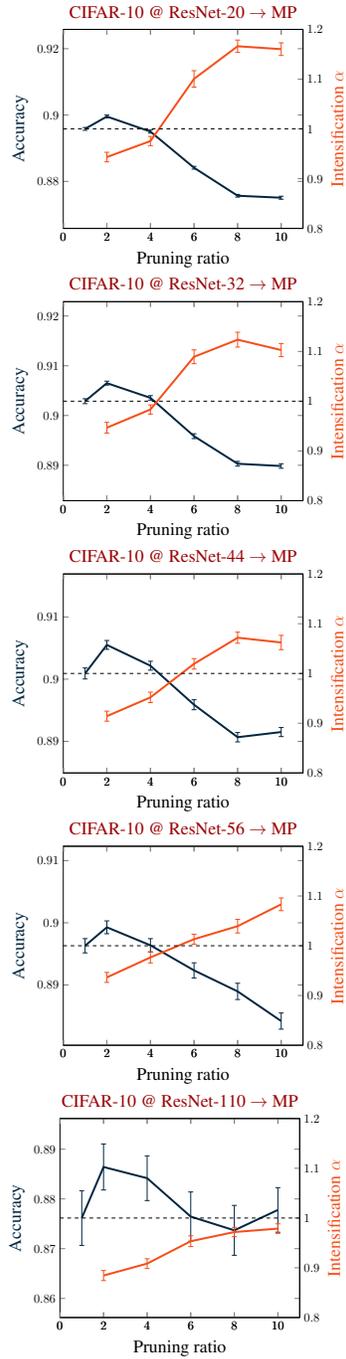

    \centering
    \aiPlot{20}{0.8658233333333335}{0.9258233333333335} \\
\aiPlot{32}{0.8828800000000001}{0.9228800000000001} \\
\aiPlot{44}{0.8849166666666665}{0.9169166666666665} \\
\aiPlot{56}{0.8803099999999998}{0.9123099999999998} \\
\aiPlot{110}{0.8561599999999999}{0.8961599999999999}
    \caption{Comparison between mean accuracy and mean intensification along with their standard errors for 30 models trained on CIFAR-10 and pruned with pruning ratios 2, 4, 6, 8, and 10.}
    \label{fig:tradeoff}
\end{figure}

\clearpage

\section{Variance of recall balance}\label{ap:variance}

Table~\ref{tab:variance} reports mean recall variance on each model before and after pruning for each type of model and pruning ratio used in our study. All the cases in which the recall variance after pruning is greater than before pruning are in bold. That probably confirms the intuition of one of the anonymous reviewers that recall variance increases with pruning. We would emphasize, however, that variance alone is not a clear indicator if recall differences increase or decrease.

\begin{table}[h!]
    \centering
    \caption{Mean recall variance for each model and pruning ratio considered.}
    \label{tab:variance}
    \begin{tabular}{c|cccccc}
Model &	Before &	Ratio 2 &	Ratio 4 &	Ratio 10 &	Ratio 20 &	Ratio 50 \\ 
\hline
CIFAR-10 @ ResNet-20 → MP &	0.0036 &	\textbf{0.0036} &	\textbf{0.0039} &	\textbf{0.0057} &	\textbf{0.0097} &	\textbf{0.0318} \\
CIFAR-10 @ ResNet-32 → MP &	0.0032 &	0.0031 &	\textbf{0.0033} &	\textbf{0.0045} &	\textbf{0.0074} &	\textbf{0.0189} \\
CIFAR-10 @ ResNet-44 → MP &	0.0038 &	0.0035 &	0.0037 &	\textbf{0.0048} &	\textbf{0.0068} &	\textbf{0.0153} \\
CIFAR-10 @ ResNet-110 → MP &	0.0039 &	0.0036 &	0.0037 &	\textbf{0.0045} &	\textbf{0.0061} &	\textbf{0.0135} \\
CIFAR-100 @ ResNet-56 → MP &	0.0004 &	0.0004 &	0.0004 &	\textbf{0.0005} &	\textbf{0.0009} &	\textbf{0.0021} \\
Fashion @ LeNet → MP &	0.0099 &	0.0098 &	0.0088 &	0.0088 &	\textbf{0.0115} &	\textbf{0.0416} \\
MNIST @ LeNet → MP &	0.0038 &	\textbf{0.0038} &	\textbf{0.0038} &	\textbf{0.0038} &	\textbf{0.0039} &	\textbf{0.0111} \\
CIFAR-10 @ ResNet-56 → MP &	0.0038 &	0.0034 &	0.0036 &	\textbf{0.0045} &	\textbf{0.0062} &	\textbf{0.0142} \\
CIFAR-10 @ ResNet-56 → GP &	0.0039 &	0.0033 &	0.0037 &	\textbf{0.0048} &	\textbf{0.0072} &	\textbf{0.0149} \\
CIFAR-10 @ ResNet-56 → UP &	0.004 &	0.0034 &	0.0033 &	\textbf{0.004} &	\textbf{0.0056} &	\textbf{0.0104} \\
CIFAR-10 @ ResNet-56 → RP &	0.0038 &	\textbf{0.1504} &	\textbf{0.132} &	\textbf{0.1413} &	\textbf{0.1487} &	\textbf{0.227}
    \end{tabular}
\end{table}

\section{On the use of $\alpha$ instead of $I$ to measure intensification}\label{ap:alpha}

One concern with using $\alpha$ is that we do not attribute the same weight to large variations around the origin. However, we note that across all scatterplots in the experimental data that we collected, we did not observe any behavior around the origin that would differ substantially from the linear trend. Hence, the slope (as measured by $\alpha$) seems to be an appropriate summary of trends seen in our plots. On the other hand, points near the origin correspond to intensification ratios whose denominators are near zero, and hence on the scale of y/x ratios they are often volatile outliers---even though on our scatterplots their (x,y) pairs are not outliers. Since the equally-weighted mean of intensification ratios is not robust to outliers, it is not an appropriate summary of trends seen in our data. 

To illustrate that, consider the results for CIFAR-10 @ ResNet-32 with MP at rate 20 in Figure~\ref{fig:model_scatterplots}. We picked this plot because it has many points concentrated around the origin while presenting a clearly linear behavior. The value of $\alpha$ reported in the plot (1.455) corresponds to a linear regression using data from all models and classes. If we calculate for each of the 30 models separately, we obtain $1.5 \pm 0.1$ with a minimum of 1.3 and a maximum of 1.7. In turn, if we calculate the mean of the intensifications averaging all classes for each model, we obtain $1.4 \pm 1.4$ with a minimum of -5.0 and a maximum of 4.5.

For the model that yields the minimum of -5.0, there is a single outlier intensification ratio, corresponding to the pair of (x,y) values (0.0004, -0.0285). For the class associated with these values, the recall before pruning is 90.3\% and the model accuracy is 90.26\%, hence implying that the model overperforms for this class. Since the test set has 1,000 samples for each class, it would take only one more sample being incorrect for the model to underperform for this class. However, this class alone contributes with an intensification of -64, which would have been positive but similarly large in absolute value if one more test sample were incorrect before pruning.

For the model that yields the maximum of 4.5, there is a single outlier intensification ratio, corresponding to the pair of (x,y) values (-0.0003, -0.0099). The recall for this class before pruning is again 90.3\% and the model accuracy is 90.33\%, hence implying that the model slightly underperforms for this class. Although the intensification in this case is 30, we note that the normalized recall balance after pruning remains the smallest across all classes. Furthermore, it would take only one more sample being incorrect for the model to overperform for this class, in which case the intensification would be negative but again similarly large in absolute value.

In other words, we believe that the value of $\alpha$ represents a more consistent and representative characterization of the intensification effect of pruning on recall balance, since it reflects the consistent trends across models shown in our scatterplots. It is true that $\alpha$ gives less weight to outlier cases corresponding to classes that had recall very close to the model accuracy before pruning, but we believe this is reasonable because those classes have unstable ratios due to their denominators being near zero. Moreover, since their recalls are so close to the accuracy, it does not seem as appropriate to attribute such changes to intensification.

Another way to think about this is that for the classes in such a situation, the model is only narrowly over- or underperforming, which means that the intensification ratio for the class is not as informative for our purposes. We agree that $\alpha$ is not the only way to aggregate information across classes, but we wish to emphasize that intensification ratios are meant to help us think about questions such as “If a recall-balance is already non-negligible, when does pruning push it even farther away in the same direction?” This is distinct from asking general questions about variability, such as “When does pruning make small recall-balances more variable?”

\section{Experiments on modern pruning methods}\label{ap:modern}

We studied in more detail the effect of the intensification in the recent pruning methods LTH~(Lottery Ticket Hypothesis)~\citep{frankle2019lottery} and CHIP~(CHannel Independence-based Pruning)~\citep{sui2021chip} on ResNet-56. In both cases, we observe that the intensification ratio ultimately increases with the pruning ratio and that an intensification above 1 consistently occurs if the pruning ratio exceeds a certain threshold, which depends on the method. In order to reach that threshold, we have experimented with higher pruning ratios than those reported in the papers describing those methods. 

We also note that it is not straightforward to adapt multiple methods to successfully operate on exactly the same pruning ratios, since there is a lot of engineering in making sophisticated methods work well. For example, to decide the amount pruned on each iteration with LTH or to decide the amount pruned from each layer with CHIP. For that reason, we emphasize once more our belief that studying classic methods makes it easier to isolate different factors that may influence intensification, as we did in our study. Nevertheless, we appreciate the recommendations by the reviewers to consider other methods, and we believe that the results that we obtained for those methods endorse the main message of our work about how intensification operates at lower and higher pruning ratios. 

Table~\ref{tab:lth} summarizes the outcome of 10 runs of LTH on models trained on the same setting as those of our other experiments using 30,000 steps for training at each level of pruning, with the corresponding pruning ratio next to it. The level 0 (pruning ratio 1) corresponds to the original model without pruning. The LTH paper~\citep{frankle2019lottery} only goes as far as step 15. We extend the number of steps using the same pruning ratio between steps used up to step 15.

\begin{table}[]
    \centering
    \caption{Results for accuracy and intensification at each level and corresponding pruning ratio when using the LTH method~\citep{frankle2019lottery} on 10 ResNet-56 models trained on CIFAR-10. }
    \label{tab:lth}
    \begin{tabular}{cc|cc}
Level &	Pruning ratio &	Accuracy &	Intensification \\
\hline
0 &	1.00 &	0.855 &	--  \\
1 &	1.25 &	0.854 &	1.010 \\
2 &	1.56 &	0.859 &	0.986 \\
3 &	1.95 &	0.859 &	0.970 \\
4 &	2.44 &	0.859 &	0.974 \\
5 &	3.05 &	0.859 &	0.970 \\
6 &	3.81 &	0.859 &	0.932 \\
7 &	4.77 &	0.858 &	0.957 \\
8 &	5.96 &	0.856 &	0.973 \\
9 &	7.45 &	0.853 &	0.982 \\
10 &	9.31 &	0.850 &	0.980 \\
11 &	11.64 &	0.846 &	0.997 \\
12 &	14.55 &	0.841 &	1.029 \\
13 &	18.19 &	0.835 &	1.055 \\
14 &	22.74 &	0.826 &	1.131 \\
15 &	28.42 &	0.811 &	1.244 \\
16 &	35.53 &	0.795 &	1.293 \\
17 &	44.41 &	0.769 &	1.377 \\
18 &	55.51 &	0.739 &	1.536 \\
19 &	69.39 &	0.694 &	1.767 \\
20 &	86.74 &	0.669 &	1.910
    \end{tabular}
\end{table}

Even if restricted to the first 15 steps, we already observe intensification by step 12. The increase in intensification is consistent since step 6, which is in line with our findings using classic methods.

Table~\ref{tab:chip} summarize the outcome of 15 runs of CHIP on models trained on the same setting as those of our other experiments, with the corresponding pruning ratio next to it. Pruning ratio 1 corresponds to the original model without pruning. The CHIP paper~\citep{sui2021chip} only goes as far as pruning ratio 3.33. We extend the number of pruning ratios with ratios 8.27 and 19.11 by preserving the proportion of unpruned weights used for pruning ratio 3.33.

\begin{table}[]
    \centering
    \caption{Results for accuracy and intensification at each pruning ratio, including extrapolated steps, when using the CHIP method~\citep{sui2021chip} on 15 ResNet-56 models trained on CIFAR-10.}
    \label{tab:chip}
    \begin{tabular}{c|cc}
Pruning ratio &	Accuracy &	Intensification \\
\hline
1.00 &	0.886 &	-- \\
1.75 &	0.934 &	0.516 \\
3.33 &	0.920 &	0.650 \\
8.27 &	0.882 &	0.985 \\
19.11 &	0.824 &	1.494 
    \end{tabular}
\end{table}

If restricted to pruning ratios 1.75 and 3.33, we observe that intensification starts to increase from one pruning ratio to another, which is in line with our findings. 

\end{document}